\DeclareMathOperator*{\argmin}{arg\,min}
\def\bgamma{\mbox{{\boldmath $\gamma$}}}
\def\mB{{\mathcal B}}
\def\mC{{\mathcal C}}
\def\mD{{\mathcal D}}
\def\mF{{\mathcal F}}
\def\mG{{\mathcal G}}
\def\mH{{\mathcal H}}
\def\mM{{\mathcal M}}
\def\mN{{\mathcal N}}
\def\mU{{\mathcal U}}
\def\mV{{\mathcal V}}
\def\mW{{\mathcal W}}
\def\mX{{\mathcal X}}
\DeclareMathAlphabet\mathbfcal{OMS}{cmsy}{b}{n}
\def\0{{\bf 0}}
\def\1{{\bf 1}}
\def\bI{{\bf I}}
\def\bV{{\bf V}}
\def\bh{{\bf h}}
\def\br{{\bf r}}
\def\bv{{\bf v}}
\def\bx{{\bf x}}
\def\bz{{\bf z}}
\def\mmE{{\mathbb E}}
\def\mmR{{\mathbb R}}
\def\trsp{{\sf T}}
\def\st{{\mathrm{s.t.}}}
\def\bx{{\bf x}}
\def\bh{{\bf h}}
\def\bz{{\bf z}}
\def\st{{\mathrm{s.t.}}}
\newtheorem{deftn}{Definition}
\newtheorem{thm}{Theorem}
\newtheorem*{*thm}{Theorem}
\newtheorem{lemma}{Lemma}
\newtheorem*{*lemma}{Lemma}
\newenvironment*{proof}{\textbf{Proof}\quad}{\hfill $\square$\par}
\begin{document}

\twocolumn[
\icmltitle{Adversarial Learning with Local Coordinate Coding}



\icmlsetsymbol{equal}{*}

\begin{icmlauthorlist}
\icmlauthor{Jiezhang Cao}{equal,scut}
\icmlauthor{Yong Guo}{equal,scut}
\icmlauthor{Qingyao Wu}{equal,scut}
\icmlauthor{Chunhua Shen}{adelaide}
\icmlauthor{Junzhou Huang}{uta}
\icmlauthor{Mingkui Tan}{scut}
\end{icmlauthorlist}

\icmlaffiliation{scut}{School of Software Eigineering, South China University of Technology, China}
\icmlaffiliation{adelaide}{School of Computer Science, The University of Adelaide, Australia}
\icmlaffiliation{uta}{Tencent AI Lab, China; University of Texas at Arlington, America}
\icmlcorrespondingauthor{Mingkui Tan}{mingkuitan@scut.edu.cn}

\icmlkeywords{Adversarial Learning, Latent Manifold, Local Coordinate Coding, Generative Adversarial Networks, LCC Sampling, Generalization Bound, Intrinsic Dimension} 

\vskip 0.3in
]



\printAffiliationsAndNotice{\icmlEqualContribution} 

\begin{abstract}
	Generative adversarial networks (GANs) aim to generate realistic data from some prior distribution (\textit{e.g.}, Gaussian noises).
	However, such prior distribution is often independent of real data and thus may lose semantic information (\textit{e.g.}, geometric structure or content in images) of data. 
	In practice, the semantic information might be represented by some latent distribution learned from data, which, however, is hard to be used for sampling in GANs. 
	In this paper, rather than sampling from the pre-defined prior distribution, we propose a Local Coordinate Coding (LCC) based sampling method to improve GANs.
    We derive a generalization bound for LCC based GANs and prove that a small dimensional input is sufficient to achieve good generalization performance.
    Extensive experiments on various real-world datasets demonstrate the effectiveness of the proposed method.
\end{abstract}

\section{Introduction}
Generative Adversarial Networks (GANs) \cite{goodfellow2014gans} have been successfully applied to many tasks, such as video prediction \cite{ranzato2014video, mathieu2015deep}, image translation \cite{isola2017image, kim2017learning}, etc.
Specifically, GANs learn to generate data by playing a two-player game: a generator tries to produce samples from a simple latent distribution, and a discriminator distinguishes between the generated data and real data.

Recently, many attempts have been made to improve GANs \cite{radford2015unsupervised, arjovsky2017wasserstein, karras2017progressive}.
However, existing studies suffer from two limitations. First, many studies employ some simple prior distribution, such as Gaussian distributions \cite{goodfellow2014gans} and uniform distributions~\cite{radford2015unsupervised}.
However, such pre-defined prior distributions are often independent of the data distributions and these methods may produce images with distorted structures without sufficient semantic information.
Although such semantic information can be represented by some latent distribution,
\textit{e.g.}, extracting embeddings using an AutoEncoder~\cite{hinton2006reducing},
how to conduct sampling from this distribution still remains an open question in GANs.

Second, the generalization ability of GANs w.r.t. the dimension of the latent distribution is unknown.
In practice, we observe that the performance of GANs is sensitive to the dimension of the latent distribution.
Unfortunately, it is difficult to analyze the dimensionality of the latent distribution, since the specified prior distribution is independent of the real data.
Therefore, it is very necessary and important to explore a new method to study the dimension of latent distribution and its impacts on the generalization ability.

In this paper, relying on the manifold assumption on images~\cite{tenenbaum2000global, roweis2000nonlinear}, 
we propose a novel generative model using Local Coordinate Coding (LCC)~\cite{yu2009nonlinear} to improve GANs in generating perceptually convincing images.
First, we employ an AutoEncoder to learn embeddings lying on the latent manifold to capture the  semantic information in data.
Then, we develop a new LCC sampling method for training GANs by exploiting the local information on the latent manifold.

The contributions of this paper are summarized as follows.

First, we propose an LCC sampling method for GANs to capture the local information of data.
With the LCC sampling, the proposed scheme, called LCC-GANs, is able to sample meaningful points from the latent manifold to generate new data.

Second, we study the generalization bound of LCC-GANs based on the Rademacher complexity of the discriminator set and the error w.r.t. the intrinsic dimensionality of the manifold.
In particular, we prove that a small dimensional input is sufficient to achieve good generalization performance.
Extensive experiments on real-world datasets demonstrate the superiority of the proposed method over several state-of-the-arts.

\section{Related Studies}

Recently, Generative Adversarial Networks have shown promising performance for generating images, such as DCGANs \cite{radford2015unsupervised}, WGANs \cite{ arjovsky2017wasserstein} and Progressive GANs \cite{karras2017progressive}.
Most existing generative models seek to learn from some simple prior distribution, such as Gaussian distributions and uniform distributions, to generate samples \cite{goodfellow2014gans, arjovsky2017wasserstein, radford2015unsupervised, karras2017progressive}.
However, such prior distributions are independent of the data distributions, which may lose semantic information and lead to difficulties in analyzing the dimension of latent space.

Besides, some generative models do sampling via some learned posterior distribution. 
For example, Variational AutoEncoder (VAE) \cite{kingma2013auto}, Wasserstein AutoEncoder (WAE) \cite{tolstikhin2018wasserstein} and Adversarial AutoEncoder (AAE) \cite{makhzani2015adversarial} enforce the posterior distribution to match the prior distribution.
However, it is difficult for these methods to conduct sampling directly on the posterior distribution.
Moreover, although these methods help to make inference, overly simplified distributions would also lose semantic information.

\section{Preliminaries}

\subsection{Local Coordinate Coding}
We first introduce some definitions about local coordinate coding which will be used to develop our proposed method.
\begin{deftn} \textbf{\emph{(Lipschitz Smoothness \cite{yu2009nonlinear})}}
	A function $ f_{\theta}(\bx) $ in $ \mmR^d $ is $ (L_{\bx}, L_{f}) $-Lipschitz smooth if $ \| f(\bx') - f(\bx) \|_2 \leq L_{\bx} \| \bx - \bx' \|_2 $ and $ \| f(\bx') - f(\bx) - \nabla f(\bx)^\trsp (\bx' - \bx) \|_2 \leq L_{f} \| \bx - \bx' \|_2^2 $, where $ L_{\bx}, L_{f} > 0 $.
\end{deftn}
\begin{deftn} \textbf{\emph{(Coordinate Coding \cite{yu2009nonlinear})}} \label{definition: Coordinate Coding}
	A coordinate coding is a pair $ (\bgamma, \mC) $, where $ \mC \subset \mmR^d $ is a set of anchor points (bases), and $ \gamma $ is a map of $ \bx \in \mmR^d $ to $ \left[ \gamma_{\bv} (\bx) \right]_{\bv \in \mC} \in \mmR^{|\mC|} $ such that $ \sum_{\bv} \gamma_{\bv} (\bx) = 1 $. Then, the physical approximation of $ \bx \in \mmR^d $ is $ \br(\bx) = \sum_{\bv \in \mC} \gamma_{\bv} (\bx) \bv $.
\end{deftn}
Definition \ref{definition: Coordinate Coding} indicates that any point in $ \mmR^d $ can be represented by a linear combination of a set of anchor points.

\subsection{Latent Manifold}
High dimensional data often lie on some low dimensional manifold \cite{tenenbaum2000global, roweis2000nonlinear}.
Based on this manifold assumption, we can learn a manifold $ \mM $ embedded in the latent space $ \mmR^{d_B} $ by some manifold learning method, such as an AutoEncoder (AE)~\cite{hinton2006reducing}, to capture the semantic information of data.  
Given $ N $ training data $ \{ \bx_i \}_{i=1}^N $, we can use an Encoder to extract the embeddings $ \{ \bh_i \}_{i=1}^{N} $, where $ \bh_i = \mathrm{Encoder}(\bx_i)$.
Formally, the latent manifold can be defined as follows.

\begin{deftn} \textbf{\emph{(Latent Manifold \cite{yu2009nonlinear})}} \label{definition: manifold}
	A subset $ \mM $ embedded in the latent space $ \mmR^{d_{B}} $ is called a smooth manifold with a \textbf{intrinsic dimension} $ d := d_{\mM} $, if there exists a constant $ c_{\mM} $, such that given any $ \bh \in \mM $, there are $ d $ bases $ \bv_1(\bh), \ldots, \bv_{d}(\bh) \in \mmR^{d_B} $ so that $ \forall\; \bh' \in \mM:$
\begin{equation*}	
\begin{array}{ll}
	\inf_{\bgamma \in \mmR^{d}} \left\| \bh' - \bh - \sum_{j=1}^{d} \gamma_j \bv_j(\bh) \right\|_2 \leq c_{\mM} \| \bh' - \bh \|_2^{2}.
\end{array}
\end{equation*}
	where $ \bgamma = [\gamma_1, \ldots, \gamma_d]^{\trsp} $ is the local coding of a latent point $ \bh $ using the corresponding bases.
\end{deftn}

\begin{figure}[t]
	\centering
	\subfigure[Local function approximation based on LCC]
	{
		\label{fig:LCC_a}
		\includegraphics[width = 0.47\columnwidth]{./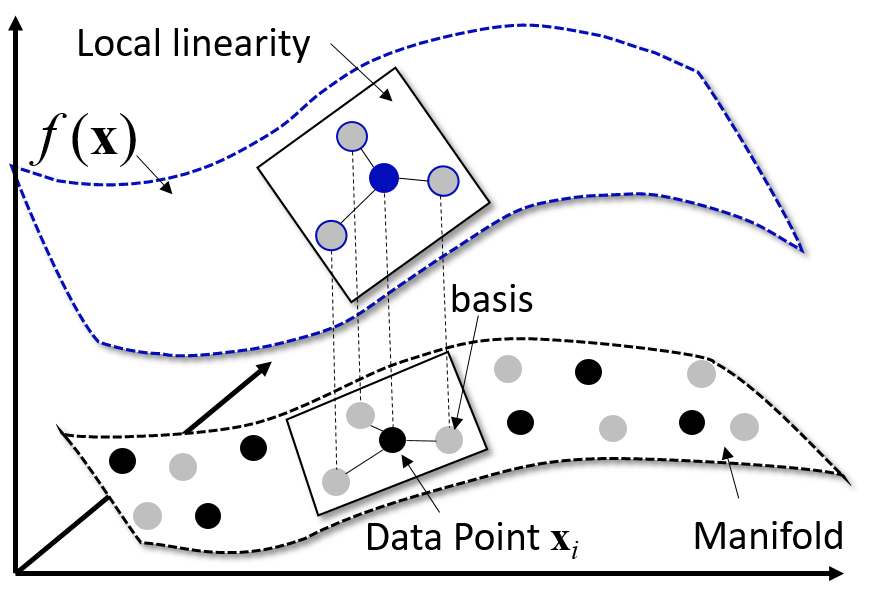}
	}
	\subfigure[Global function approximation based on LCC]{
		\label{fig:LCC_b}
		\includegraphics[width = 0.47\columnwidth]{./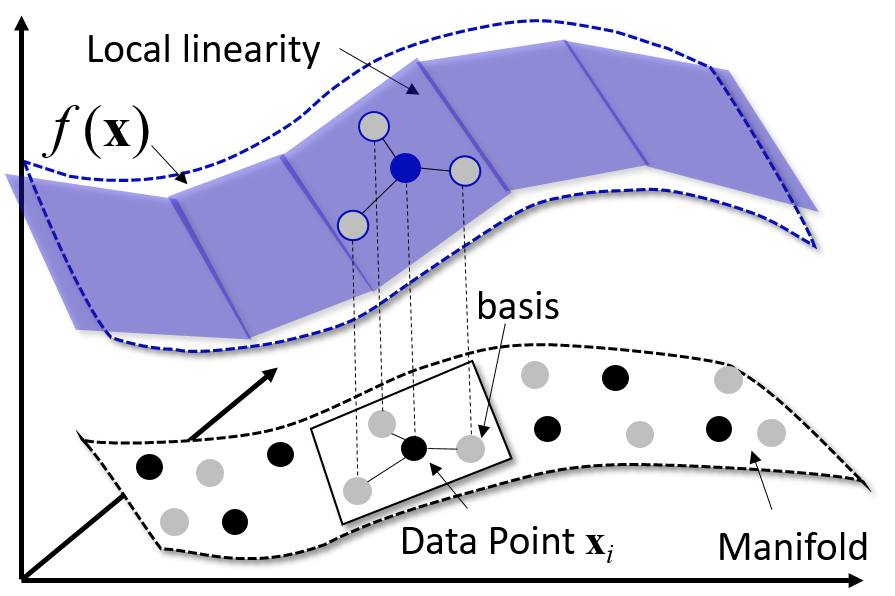}
	}
	\caption{A geometric view of Local Coordinate Coding. Given a set of local bases, if data lie on a manifold, a nonlinear function $ f(\bx) $ can be locally approximated by a linear function w.r.t. the coding. Given all bases, $ f(\bx) $ can be globally approximated. }
	\label{fig:LCC}
\end{figure}

\subsection{Generative Adversarial Networks}
We apply the neural network distance \cite{arora2017gans} to measure the similarity between two distributions.
\begin{deftn} \textbf{\emph{(Neural Network Distance \cite{arora2017gans}) }} \label{definition: F_distance}
	Let $ \mF $ be a set of neural networks
	from $ \mmR^d $ to $ [0, 1] $ and $ \phi $ be a concave measure function, then for $ D \in \mF $, the neural network distance w.r.t. $ \phi $ between two distributions $ \mu $ and $ \nu $ can be defined as
	\begin{align*}
	d_{\mF, \phi} (\mu, \nu) \small{=} \small{\sup\limits_{D \in \mF}} \left| \mathop \mmE\limits_{\bx \sim \mu} \big[ \phi(D(\bx)) \big] \small{+} \mathop \mmE\limits_{\bx \sim \nu} \big[ \phi(\widetilde{D}(\bx)) \big] \right| \small{-} \phi_c,
	\end{align*}
	where $ \phi_c = 2 \phi(\frac{1}{2}) $ is a constant with given $ \phi $ and $ \widetilde{D}(\bx) = 1 - D(\bx) $. \emph{For simplicity, we can omit the constant $ \phi_c $.}
\end{deftn}

\textbf{Objective function of general GANs. }
Given a Generator $ G_u $ and a Discriminator $ D_v $ parameterized by $ u \in \mU $ and $ v \in \mV $, where $ \mU $ and $ \mV $ are parameter spaces.
Let $ \mD_{real} $ be the real distribution of training samples $ \bx \in \mmR^d $ and $ \mD_{G_u} $ be the  distribution  generated by $ G_u $. The objective function of GANs can be defined as:
\begin{align*}
\mathop {\min}\limits_{u \in \mU} \mathop {\max}\limits_{v \in \mV} \mathop\mmE_{\bx \sim \mD_{real}} \left[ \phi (D_v(\bx)) \right] + \mathop\mmE_{\bx \sim \mD_{G_u}} \left[ \phi (1 - D_v(\bx)) \right],
\end{align*}
where $ \phi: [0, 1] \rightarrow \mmR $ is any monotone function.

\begin{figure*}[htp]
	\centering
	{
		\includegraphics[width=0.7\linewidth]{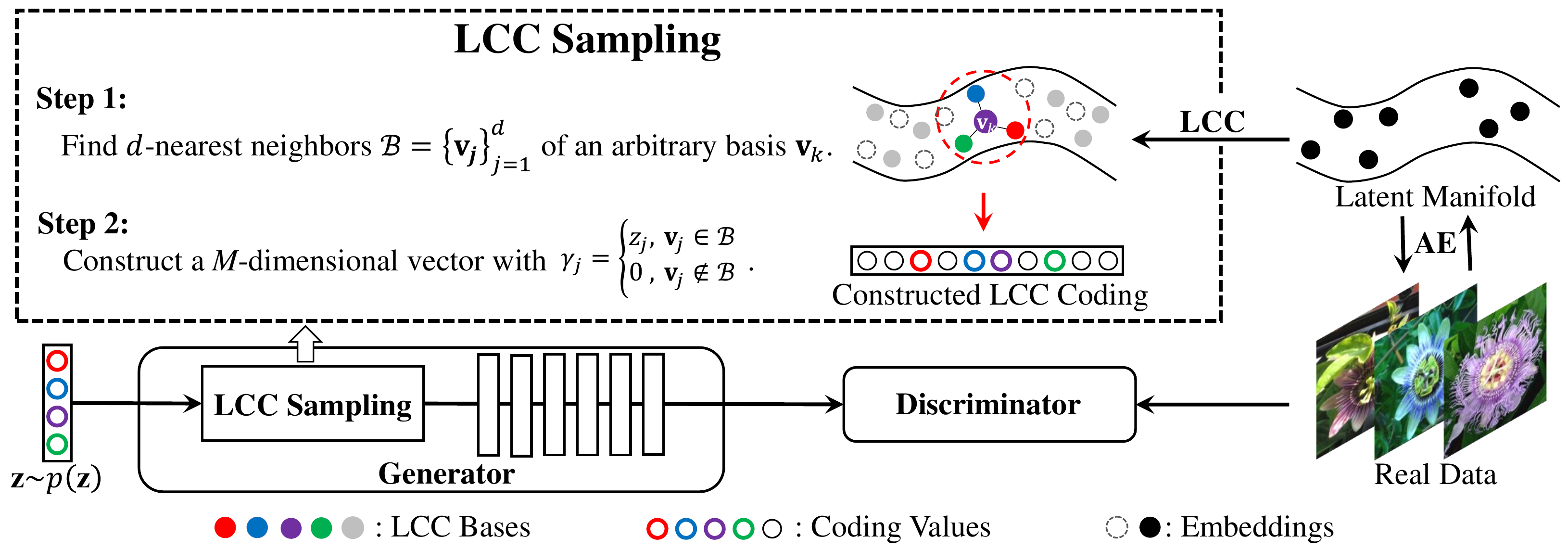}
		\caption{The scheme of the proposed LCC-GANs. We use an AutoEncoder to learn the embeddings on the latent manifold from real data.
		Relying on LCC, we learn a set of bases such that the LCC sampling can be conducted. As a result, the proposed method is able to take the constructed LCC codings to generate new data. }
		\label{Fig: AE_LCC_GANs}
	}
\end{figure*}

\section{Adversarial Learning with LCC}
In this section, we seek to improve GANs by exploiting LCC. The overall structure of the proposed method, called LCC-GANs, is illustrated in Figure \ref{Fig: AE_LCC_GANs}.

As shown in Figure \ref{Fig: AE_LCC_GANs}, instead of sampling from some pre-defined prior distribution, we seek to sample points from a learned latent manifold for training GANs.
Specifically,
we use an AutoEncoder (AE) to learn embeddings over a latent manifold of real data and then employ LCC to learn a set of bases to form local coordinate systems on the latent manifold. After that, we introduce LCC into GANs by approximating the generator using a linear function w.r.t. a set of codings (see Section \ref{sebsec:G_appro}).
Relying on such approximation, we then propose an LCC based sampling method to exploit the local information of data on the latent manifold (see Section \ref{subsec:lcc_sampling}).
The details of the proposed method are illustrated in following subsections.

\subsection{Generator Approximation Based on LCC} \label{sebsec:G_appro}
According to Definition \ref{definition: manifold}, any point on the latent manifold can be approximated by a linear combination of a set of local bases.
Inspired by this, if the bases are sufficiently localized, the generator of GANs can also be approximated by a linear function w.r.t. a set of codings.

\begin{figure*}[t]
	\centering
	{
		\includegraphics[width=0.56\linewidth]{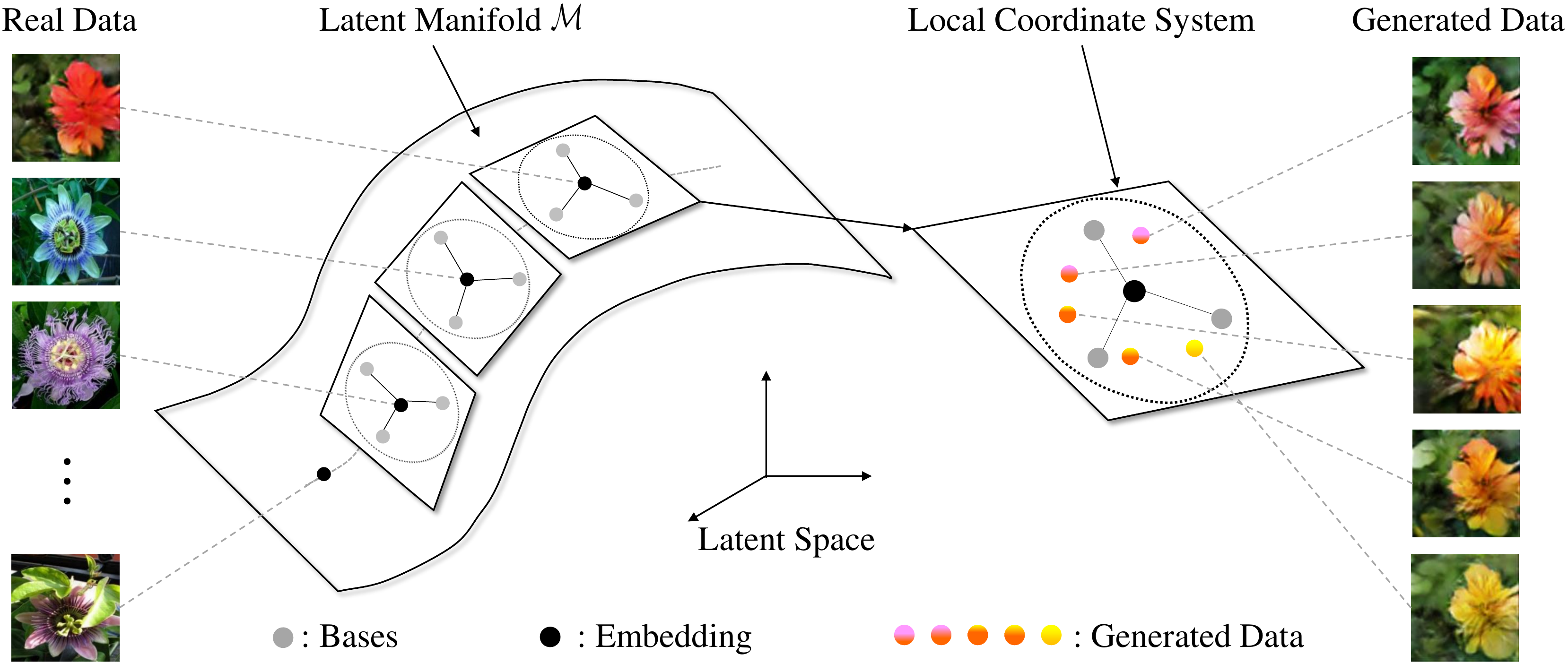}
		\caption{The geometric views on LCC Sampling. By learning embeddings (\textit{i.e.}, black points) which lie on the latent manifold, we use LCC to learn a set of bases (\textit{i.e.}, gray points) to form a local coordinate system such that we can sample different latent points (\textit{i.e.}, coloured points) by LCC sampling. As a result, LCC-GANs can generate new data which have different attributes. }
		\label{Fig: LCC_Sampling}
	}
\end{figure*}

\begin{lemma} ~\textbf{\emph{(Generator Approximation)}} \label{lemma: Generator Approximation}
	Let $ (\bgamma, \mC) $ be an arbitrary coordinate coding on $ \mmR^{d_B} $.
	Given a $ (L_{\bh}, L_{G}) $-Lipschitz smooth generator $ G_u(\bh) $, for all $ \bh \in \mmR^{d_B} $:
\begin{equation}\label{eqn:ga}
\begin{aligned}
	&\left\| G_u\left(\sum\nolimits_{\bv \in \mC} \gamma_{\bv}(\bh) \bv\right) \small{-} \sum\nolimits_{\bv \in \mC} \gamma_{\bv} (\bh) G_u(\bv) \right\|_2  \\ 
	\leq& 2L_{\bh} \| \bh \small{-} \br(\bh) \|_2 \small{+} L_G \sum\nolimits_{\bv \in \mC} |\gamma_{\bv} (\bh)| \small{\cdot} \| \bv \small{-} \br(\bh) \|_2^{2},
\end{aligned}
\end{equation}
	where $ \br(\bh) = \sum_{\bv \in \mC} \gamma_{\bv} (\bh) \bv $.
\end{lemma}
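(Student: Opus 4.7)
The plan is to decompose the left-hand side with the triangle inequality using $G_u(\bh)$ as an intermediate point: bound $\|G_u(\br(\bh)) - \sum_\bv \gamma_\bv(\bh) G_u(\bv)\|_2$ by $\|G_u(\br(\bh)) - G_u(\bh)\|_2 + \|G_u(\bh) - \sum_\bv \gamma_\bv(\bh) G_u(\bv)\|_2$. The first piece is immediately controlled by the first-order part of the Lipschitz-smoothness assumption, giving $L_\bh \|\bh - \br(\bh)\|_2$; this already accounts for one of the two copies of the linear term that appear in the stated bound.

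For the second piece I would exploit the normalization $\sum_\bv \gamma_\bv(\bh) = 1$ from Definition~\ref{definition: Coordinate Coding} to rewrite $G_u(\bh) - \sum_\bv \gamma_\bv(\bh) G_u(\bv) = \sum_\bv \gamma_\bv(\bh)\bigl[G_u(\bh) - G_u(\bv)\bigr]$, then insert $\pm G_u(\br(\bh))$ inside the bracket. This splits the sum into $G_u(\bh) - G_u(\br(\bh))$, which contributes another $L_\bh \|\bh - \br(\bh)\|_2$ and produces the factor of $2$, plus a residual $\sum_\bv \gamma_\bv(\bh)\bigl[G_u(\br(\bh)) - G_u(\bv)\bigr]$ that is responsible for the quadratic term.

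To handle this residual I would apply the second-order part of Lipschitz-smoothness \emph{at the anchor} $\br(\bh)$, writing $G_u(\bv) = G_u(\br(\bh)) + \nabla G_u(\br(\bh))^\trsp (\bv - \br(\bh)) + R_\bv$ with $\|R_\bv\|_2 \leq L_G \|\bv - \br(\bh)\|_2^2$. The key algebraic observation is that the linear contributions cancel after $\gamma_\bv(\bh)$-weighting, because $\sum_\bv \gamma_\bv(\bh)(\bv - \br(\bh)) = \br(\bh) - \br(\bh) = 0$ by the very definition of $\br(\bh)$. What survives is $-\sum_\bv \gamma_\bv(\bh) R_\bv$, whose norm is at most $L_G \sum_\bv |\gamma_\bv(\bh)| \cdot \|\bv - \br(\bh)\|_2^2$ by the triangle inequality. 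Adding the three pieces reproduces exactly \eqref{eqn:ga}.

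The main obstacle is identifying the correct Taylor expansion point: expanding at $\bh$ produces a quadratic remainder of the form $\|\bv - \bh\|_2^2$ and does not match the stated bound, whereas expanding at $\br(\bh)$ is precisely what forces the first-order term to vanish via $\br(\bh) = \sum_\bv \gamma_\bv(\bh)\bv$. Once this choice is made, the rest of the argument is essentially bookkeeping with the triangle inequality and the two parts of the Lipschitz-smoothness definition.
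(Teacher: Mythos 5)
Your proof is correct and follows essentially the same route as the paper: the triangle inequality through $G_u(\bh)$, the first-order Lipschitz bound for $\| G_u(\bh) - G_u(\br(\bh)) \|_2$, and the second-order expansion at $\br(\bh)$ whose linear terms vanish because $\sum_{\bv \in \mC} \gamma_{\bv}(\bh)\left(\bv - \br(\bh)\right) = \0$. The only cosmetic difference is that the paper first establishes this estimate composed with an $L_{\bx}$-Lipschitz discriminator (its auxiliary lemma) and then specializes to the identity discriminator, whereas you carry out the identical algebra directly on the generator.
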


Given the local bases and a Lipschitz smooth generator, the generator w.r.t. the linear combination of the local bases can be approximated by the linear combination of the generator w.r.t. local bases.
Since two close latent points often share the same local bases but with different weights (\textit{i.e.}, codings), we can change these weights for generator approximation.
Therefore, the pieces of generated data can cover an entire manifold seamlessly (see Figure \ref{fig:LCC_b}).

\textbf{Objective function of LCC.} We minimize the right-hand term of the inequality in (\ref{eqn:ga}) to obtain a set of bases.
Given a set of the latent points $ \{ \bh_i \}_{i=1}^N $, by assuming $ \bh \approx \br(\bh) $ \cite{yu2009nonlinear},   we address the following problem:
\begin{equation}
\begin{aligned}
\small{\min_{\bgamma, \mC}}& \small{\sum_{\bh}} 2L_{\bh} \| \bh - \br(\bh) \|_2 \small{+} L_G \small{\sum\limits_{\bv \in \mC}} |\gamma_{\bv} (\bh)| \small{\cdot}\| \bv - \bh \|_2^{2} \\
\st& \sum_{\bv \in \mC} \gamma_{\bv} (\bh) = 1, \; \forall \,\bh, \label{problem: lcc coding}
\end{aligned}
\end{equation}
where $ \br(\bh) = \sum_{\bv \in \mC} \gamma_{\bv} (\bh) \bv $.
In practice, we update $ \bgamma $ and $ \mC $ by alternately optimizing a LASSO problem and a least-square regression problem, respectively.

\subsection{Objective Function of LCC-GANs}
After solving Problem (\ref{problem: lcc coding}), every latent point $ \bh \in \mmR^{d_B} $ is close to its physical approximation $ \br(\bh) $, \textit{i.e.}, $ \bh \approx \br(\bh) $, then the generator can be approximated by
\begin{align}\label{eqn: generator}
G_u(\bh) \approx G_u(\br(\bh)) \triangleq G_{w} (\bgamma(\bh)), \bh \in \mH,
\end{align}
where $ \br(\bh) = \bV \bgamma(\bh) $, $ \bV = \left[ \bv_1, \bv_2, \ldots, \bv_M \right] $ and $ \bgamma(\bh) = \left[ \gamma_1(\bh), \gamma_2(\bh), \ldots, \gamma_M(\bh) \right]^{\trsp} $ with $ M = |\mC| $. Here, $ \mH $ is the latent distribution and $ w \in \mW $ is the parameters of the generator w.r.t. $ u $ and fixed $ \bV $ learned from Problem (\ref{problem: lcc coding}).

Using the neural network distance, we consider the following objective function of LCC-GANs between the generated distribution and the empirical distribution:
\begin{align}\label{problem: lcc-gan}
\min_{G_w \in \mG} \; d_{\mF, \phi} \left(\widehat\mD_{G_{w} (\bgamma(\bh))}, \widehat\mD_{real} \right), \bh \in \mH.
\end{align}

To be more specific, Problem (\ref{problem: lcc-gan}) can be rewritten as:
\begin{align*}
\mathop {\min}_{\small{w \in \mW}} \mathop {\max}\limits_{v \in \mV}
\mathop\mmE_{\bx \sim \widehat{\mD}_{real}} \big[ \phi (D_v(\bx)) \big]
\small{+} \mathop{\mmE}_{\bh \sim \mH} \big[ \phi \big( \widetilde{D}_v\left(G_w\left(\bgamma(\bh) \right) \right) \big) \big],
\end{align*}
where $ \phi(\cdot) $ is a monotone function, and $ \widetilde{D}_v(\cdot) = 1 - D_v(\cdot) $.
The detailed algorithm is shown in Algorithm \ref{alg:lccgan}.

\subsection{LCC Sampling Method} \label{subsec:lcc_sampling}
To address Problem (\ref{problem: lcc-gan}), one of the key issues is on how to conduct sampling from the learned latent manifold.
Although the latent manifold can be learned by AutoEncoder, it is very hard to sample valid points on it to train GANs.
To address this,
we propose an LCC sampling method to capture the latent distribution on the learned latent manifold (see Figure~\ref{Fig: LCC_Sampling}). The proposed sampling method contains the following two steps.

\textbf{Step 1:}
Given a local coordinate system, we randomly select a latent point (specifically, it can be a basis), and then find its $ d $-nearest neighbors $ \mB = \{ \bv_j\}_{j=1}^d $.

\textbf{Step 2:}
We construct an $M$-dimensional vector $ \bgamma(\bh) = [ \gamma_1(\bh), \gamma_2(\bh), \ldots, \gamma_M(\bh) ]^{\trsp} $ as the LCC coding for sampling.
Here, each element of $ \bgamma(\bh) $ is corresponding to the weight of the basis.
To conduct local sampling, we construct the coding of the neighbors $ \mB $ as follows:
\begin{align*}
\gamma_{j} (\bh) =
\left\{ \begin{array}{l}
{z_j},\;\;{\bv_j} \in {\mB}\\
\; 0\,,\;\;{\bv_j} \notin {\mB}
\end{array} \right.,
\end{align*}
where $ z_j $ is the $ j $-th element of $ \bz \in \mmR^d $ from the prior distribution $ p(\bz) $.
Here, we set $ p(\bz) $ to be the standard Gaussian distribution $ \mN(\0, \bI) $.
Finally, we obtain a new latent point $ \bV \bgamma(\bh) $.

Based on Definition \ref{definition: manifold}, the intrinsic dimensionality is determined by the number of bases in a local region. Thus, we turn the determination of intrinsic dimension into an easier problem of selecting sufficient number of local bases.

\begin{algorithm}[t]
	\caption{LCC-GANs Training Method.}
	\label{alg:lccgan}
	\begin{algorithmic}[1]\small
		\REQUIRE  Training data $\{ \bx_i \}_{i=1}^N $; a prior distribution $p(\bz)$, where $\bz \in \mmR^d  $; minibatch size $ n $. \\
		\STATE Learn the latent manifold $\mM$ using an AutoEncoder \\
		\STATE Construct LCC bases $\{ \bv_i \}_{i=1}^M$ on $\mH$ by optimizing: \\
		\vspace{5pt}
		~~$ \small{\min_{\bgamma, \mC}}~ \small{\sum_{\bh}} 2L_{\bh} \| \bh - \br(\bh) \|_2 + L_G \small{\sum_{\bv \in \mC}} |\gamma_{\bv} (\bh)| \small{\cdot} \| \bv - \bh \|_2^{2}$ \\
		\vspace{5pt}
		\FOR{number of training iterations}
		\STATE Do LCC Sampling to obtain a minibatch $\{ \gamma(\bh_i)\}_{i=1}^n$ \\ 
		\STATE Sample a minibatch $\{\bx_i\}_{i=1}^n$ from the data distribution \\
		\STATE Update the discriminator by ascending the gradient: \\
		\vspace{3pt}
		~~~~~~~$\nabla_v \frac{1}{n} \sum\nolimits_{i=1}^n \phi(D_v(\bx_i)) + \phi( (1-D_v(G_w(\gamma(\bh_i)))))$\\
		\vspace{3pt}
		\STATE Do LCC Sampling to obtain a minibatch $\{ \gamma(\bh_i)\}_{i=1}^n$ \\
		\STATE Update the generator by descending the gradient: \\
		\vspace{3pt}
		~~~~~~~~~~~~~~~~~~$\nabla_w \frac{1}{n} \sum\nolimits_{i=1}^n \phi (1-D_v(G_w(\gamma(\bh_i))))$\\
		\ENDFOR
	\end{algorithmic}
\end{algorithm}

\section{Theoretical Analysis} \label{section: Theoretical Analysis}
We first give some necessary notations to develop our theoretical analysis for LCC based GANs.
Let $\{ \bx_i \}_{i=1}^N $ be a set of observed training samples drawn from the real distribution $ \mD_{real} $, and let $ \widehat{\mD}_{real} $ denote the empirical distribution over $\{ \bx_i \}_{i=1}^N $.  Given a generator $ G_u $ and a set of the latent points $ \{ \bh_i \}_{i=1}^r $, $ \{G_u (\bh_i)\}_{i=1}^r$ denotes a set of $ r $ generated samples from the generated distribution $ \mD_{G_u} $, and $ \widehat{\mD}_{G_{w}} $ is an empirical generated distribution.
Motivated by \cite{arora2017gans, zhang2018on}, we define the generalization of LCC-GANs as follows:

\begin{deftn} \textbf{\emph{(Generalization) }}\label{definition: generalization}
	The neural network distance $ d_{\mF, \phi}(\cdot, \cdot) $ between distributions generalizes with $ N $ training samples and error $ \epsilon $, if for a learned distribution $ \mD_{G_{u}} $, the following holds with high probability,
	\begin{align*}
	\left|d_{\mF, \phi} \left(\widehat{\mD}_{G_{w}}, {\mD}_{real} \right) - \inf_{\mG } d_{\mF, \phi} \left({\mD}_{G_u},  {\mD}_{real} \right) \right| \leq \epsilon.
	\end{align*}
\end{deftn}

In Definition \ref{definition: generalization}, the generalization of GANs means that the population distance $ d_{\mF, \phi} ({\mD}_{G_u},  {\mD}_{real} ) $ is close to the distance $ d_{\mF, \phi} (\widehat{\mD}_{G_{w}}, {\mD}_{real} ) $. In theory, we hope to obtain a small $ d_{\mF, \phi} ({\mD}_{G_u},  {\mD}_{real} ) $.  In practice, we can minimize the empirical loss $ d_{\mF, \phi} (\widehat{\mD}_{G_{{w}}}, \widehat{\mD}_{real} ) $ to approximate  $ d_{\mF, \phi} (\widehat{\mD}_{G_{{w}}}, {\mD}_{real} ) $.
First, we have the following generalization bound on $ \widehat{\mD}_{real} $ to develop the generalization analysis of LCC-GANs.

\begin{thm} \label{theorem: Generalization Bound}
	Suppose $ \phi(\cdot) $ is Lipschitz smooth: $ | \phi'(\cdot) | \leq L_{\phi} $, and bounded in $ [-\Delta, \Delta] $. Given the coordinate coding $ (\bgamma, \mC) $, an example set $ \mH $ in latent space and the empirical distribution $ \widehat{\mD}_{real} $, if the generator is Lipschitz smooth
	, then the expected generalization error satisfies:
	\begin{align*}
	&\mmE_{\mH} \left[ d_{\mF, \phi} \left(\widehat{\mD}_{G_{\widehat{w}}\left( \bgamma(\bh) \right)}, \widehat{\mD}_{real} \right)
	\right] \\
	\leq& \inf_{ \mG } \mmE_{\mH} \left[ d_{\mF, \phi} \left( {\mD}_{ G_{u} (\bh)}, \widehat{\mD}_{real} \right) \right] + \epsilon(d_{\mM}),
	\end{align*}
where $ \epsilon(d_{\mM}) = L_{\phi} Q_{L_{\bh}, L_{G}} (\bgamma, \mC) + 2\Delta $, and generative quality $ Q_{L_{\bh}, L_{G}} (\bgamma, \mC) $ is bounded w.r.t. $ d_{\mM} $ in Lemma 3 which is given in supplementary materials. 
\end{thm}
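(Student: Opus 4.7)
The plan is to control the expected neural network distance on the left by three triangle-inequality steps: an optimality step for the empirical minimizer $\widehat{w}$, a generator-approximation step powered by Lemma~\ref{lemma: Generator Approximation}, and a boundedness step that supplies the $2\Delta$. I would fix the latent sample set $\mH$ first, derive the bound pointwise in $\mH$, and only then take $\mmE_{\mH}$ and the infimum over $u$.

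First, fix any $u\in\mU$ and let $w_u$ denote the parameter for which $G_{w_u}(\bgamma(\bh))=G_u(\br(\bh))$, as in (\ref{eqn: generator}). Since $\widehat{w}$ is the empirical minimizer of Problem (\ref{problem: lcc-gan}) on $\widehat{\mD}_{real}$ with latent samples in $\mH$, one has
\[ d_{\mF,\phi}\!\left(\widehat{\mD}_{G_{\widehat{w}}(\bgamma(\bh))},\widehat{\mD}_{real}\right) \le d_{\mF,\phi}\!\left(\widehat{\mD}_{G_{w_u}(\bgamma(\bh))},\widehat{\mD}_{real}\right). \]
Next I would substitute $G_u(\bh)$ for $G_{w_u}(\bgamma(\bh))$ inside the supremum defining $d_{\mF,\phi}$. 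For any $D\in\mF$ with Lipschitz constant $L_D$, $L_\phi$-Lipschitzness of $\phi$ yields
\[ |\phi(\widetilde D(G_{w_u}(\bgamma(\bh))))-\phi(\widetilde D(G_u(\bh)))| \le L_\phi L_D\|G_{w_u}(\bgamma(\bh))-G_u(\bh)\|_2, \]
and the right-hand side is bounded pointwise by Lemma~\ref{lemma: Generator Approximation}. Averaging over $\bh\in\mH$ and taking $\sup_{D\in\mF}$, with $L_D$ absorbed into $Q$ as Lemma~3 does, gives
\[ d_{\mF,\phi}\!\left(\widehat{\mD}_{G_{w_u}(\bgamma(\bh))},\widehat{\mD}_{real}\right) \le d_{\mF,\phi}\!\left(\widehat{\mD}_{G_u(\bh)},\widehat{\mD}_{real}\right) + L_\phi\,Q_{L_{\bh},L_G}(\bgamma,\mC). \]

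For the third step I would move from the empirical generated distribution $\widehat{\mD}_{G_u(\bh)}$ to its population version $\mD_{G_u(\bh)}$. Because $\phi$ takes values in $[-\Delta,\Delta]$, each of the two $\phi$-expectations inside the sup is absolutely bounded by $\Delta$, so the replacement costs at most $2\Delta$ inside $\sup_{D\in\mF}$. Chaining the three inequalities, taking $\mmE_{\mH}$, and then $\inf_{u\in\mU}$ on the right (legal since the bound holds for every fixed $u$) delivers the claimed inequality with $\epsilon(d_\mM)=L_\phi Q_{L_{\bh},L_G}(\bgamma,\mC)+2\Delta$; invoking Lemma~3 then rewrites $Q$ in terms of the intrinsic dimension $d_\mM$.

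The hardest step I expect is Step~2: cleanly converting the $\ell_2$ output bound of Lemma~\ref{lemma: Generator Approximation} into a uniform-over-$\mF$ bound on the $\phi$-valued discriminator expectations, while keeping the discriminator Lipschitz constant absorbed into $Q$ rather than surfacing in front. This requires a Lipschitz hypothesis on $\mF$ and controlling the two summands $2L_{\bh}\|\bh-\br(\bh)\|_2$ and $L_G\sum_{\bv}|\gamma_{\bv}(\bh)|\|\bv-\br(\bh)\|_2^2$ from (\ref{eqn:ga}) uniformly in $\bh$ via the manifold assumption of Definition~\ref{definition: manifold}, which is precisely what Lemma~3 is set up to do. Steps~1 and~3 are essentially tautological once the LCC parametrization $G_w\circ\bgamma$ and the boundedness of $\phi$ are in place.
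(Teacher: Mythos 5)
Your proposal is essentially correct at the same level of rigor as the paper, but it reaches the bound by a genuinely different route. The paper's proof is a leave-one-out stability argument: it draws $n+1$ independent latent sample sets $\mH^{(1)},\ldots,\mH^{(n+1)}$, compares the left-out minimizer $\widehat{w}^{(k)}$ with the full minimizer $\widetilde{w}$ on the held-out set $\mH^{(k)}$ (this comparison, via boundedness of $\phi$, is where the paper's $2\Delta$ comes from), then invokes minimality of $\widetilde{w}$ against the LCC-realization $\sum_{\bv\in\mC}\gamma_{\bv}(\bh)G_u(\bv)$ of any fixed $G_u$, and finally applies Lemma~\ref{lemma: Generator Approximation} together with concavity, monotonicity and $L_\phi$-Lipschitzness of $\phi$ to pay $L_\phi Q_{L_{\bh},L_G}(\bgamma,\mC)$; its conclusion is stated against the \emph{empirical} generated distribution of $G_u$. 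Your chain (ERM optimality of $\widehat{w}$ on the same $\mH$, then Lemma~\ref{lemma: Generator Approximation} through a Lipschitz discriminator and $L_\phi$, then a crude $2\Delta$ swap of $\widehat{\mD}_{G_u(\bh)}$ for $\mD_{G_u(\bh)}$ using boundedness of $\phi$) produces the same $\epsilon(d_{\mM})=L_\phi Q_{L_{\bh},L_G}(\bgamma,\mC)+2\Delta$, and in fact matches the theorem as printed more literally, since your last step supplies the empirical-to-population replacement that the paper's proof leaves implicit. Two caveats. First, the interpretations of $\widehat{w}$ differ: in the paper $\widehat{w}$ is trained \emph{without} the latent set on which it is evaluated, so the left-hand side is performance on fresh latent samples and your Step~1 optimality inequality would not hold verbatim in that setting; under your reading $\widehat{w}$ is the minimizer on the same $\mH$, which the theorem statement does not exclude but which makes the left-hand side a training rather than held-out quantity. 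Second, your remark that the discriminator Lipschitz constant is ``absorbed into $Q$ as Lemma~3 does'' is not quite right—Lemma~3 (Manifold Coding) concerns only the bases and never involves $L_{\bx}$; the paper handles this by silently treating $\widetilde{D}_v$ as $1$-Lipschitz (its Lemma on $D_v\circ G_u$ approximation does carry the factor $L_{\bx}$), so your proof shares exactly the same unstated hypothesis rather than resolving it. With those provisos, your argument is a simpler and more direct derivation of the stated inequality, at the cost of losing the leave-one-out structure that motivates calling the left-hand side a generalization error over the latent sampling.
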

See supplementary materials for the proof.

Theorem \ref{theorem: Generalization Bound} shows that the generalization bound for $ \widehat{\mD}_{real} $ is related to the dimension of the latent manifold (\textit{i.e.}, $ d_{\mM} $)  rather than the dimension of the latent space (\textit{i.e.}, $ d_{B} $).
Based on Theorem \ref{theorem: Generalization Bound} and the Rademacher complexity \cite{bartlett2002rademacher}, we then accomplish the generalization bound on an unknown real distribution $ \mD_{real} $.
\begin{thm} \label{theorem: generalization_Rademacher}
Under the condition of Theorem \ref{theorem: Generalization Bound}, given an empirical distribution $ \widehat{\mD}_{real} $ drawn from $ \mD_{real} $, the following holds with probability at least $ 1 - \delta $,
	\begin{align*}
		& \left| \mmE_{\mH} \left[d_{\mF, \phi} \left(\widehat{\mD}_{G_{\widehat{w}}}, {\mD}_{real} \right) \right]
		\small{-} \inf_{ \mG } \mmE_{\mH} \left[ d_{\mF, \phi} \left({\mD}_{G_u},  {\mD}_{real} \right) \right] \right| \\
		\leq& 2 {R}_{\mX}(\mF) + 2 \Delta \sqrt{\frac{2}{N} \log(\frac{1}{\delta})} + 2\epsilon(d_{\mM}),
	\end{align*}
	where $ {R}_{\mX}(\mF) $ is the Rademacher complexity of $\mF $.
\end{thm}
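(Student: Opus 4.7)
The plan is to combine Theorem~\ref{theorem: Generalization Bound}, which controls the discrepancy on the empirical distribution $\widehat{\mD}_{real}$, with a uniform concentration argument that transfers the neural network distance from $\widehat{\mD}_{real}$ to the true distribution $\mD_{real}$. The key observation is that, since $d_{\mF,\phi}(\mu,\cdot)$ is defined through a supremum over $\mF$, for any probability measure $\mu$,
\begin{align*}
\bigl| d_{\mF,\phi}(\mu, \widehat{\mD}_{real}) - d_{\mF,\phi}(\mu, \mD_{real}) \bigr|
\leq \sup_{D \in \mF} \Bigl| \mmE_{\widehat{\mD}_{real}}[\phi(D(\bx))] - \mmE_{\mD_{real}}[\phi(D(\bx))] \Bigr| =: \Delta_N,
\end{align*}
and $\Delta_N$ is independent of $\mu$, so it commutes cleanly with any infimum over the generator class or expectation over the latent manifold.

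First I would bound $\Delta_N$ with high probability using standard empirical-process tools. Since $\phi \in [-\Delta,\Delta]$, changing one training sample moves $\Delta_N$ by at most $2\Delta/N$ in $L^\infty$, so McDiarmid's inequality yields $\Delta_N \leq \mmE[\Delta_N] + 2\Delta\sqrt{(2/N)\log(1/\delta)}$ with probability at least $1-\delta$. A symmetrization step combined with Talagrand's contraction lemma (using that $\phi$ is $L_\phi$-Lipschitz) then gives $\mmE[\Delta_N] \leq 2 R_\mX(\mF)$, with the Lipschitz constant absorbed into the Rademacher complexity as defined in the statement. This is entirely standard.

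Next I would chain this concentration inequality with Theorem~\ref{theorem: Generalization Bound} via the triangle inequality. Writing $L(\mu,\nu) := d_{\mF,\phi}(\mu,\nu)$ for brevity, the pointwise bound on $\Delta_N$ applied first to $\mu = \widehat{\mD}_{G_{\widehat{w}}}$ and then inside the infimum gives
\begin{align*}
\mmE_\mH \bigl[ L(\widehat{\mD}_{G_{\widehat{w}}}, \mD_{real}) \bigr]
&\leq \mmE_\mH \bigl[ L(\widehat{\mD}_{G_{\widehat{w}}}, \widehat{\mD}_{real}) \bigr] + \Delta_N \\
&\leq \inf_\mG \mmE_\mH \bigl[ L(\mD_{G_u}, \widehat{\mD}_{real}) \bigr] + \epsilon(d_\mM) + \Delta_N \\
&\leq \inf_\mG \mmE_\mH \bigl[ L(\mD_{G_u}, \mD_{real}) \bigr] + \epsilon(d_\mM) + 2\Delta_N,
\end{align*}
where the middle step invokes Theorem~\ref{theorem: Generalization Bound}. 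The reverse direction is obtained by a symmetric triangle-inequality argument applied to a near-minimizer of the population infimum, together with another appeal to Theorem~\ref{theorem: Generalization Bound}; the two invocations jointly produce the factor of $2$ multiplying $\epsilon(d_\mM)$ in the final bound. Substituting the high-probability control on $\Delta_N$ gives the claimed quantity $2R_\mX(\mF) + 2\Delta\sqrt{(2/N)\log(1/\delta)} + 2\epsilon(d_\mM)$.

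The main obstacle I anticipate is handling the infimum-versus-minimum asymmetry in the reverse direction, since no generator need attain $\inf_\mG \mmE_\mH[L(\mD_{G_u}, \mD_{real})]$ in general; this is resolved by working with $\epsilon$-near minimizers and sending the tolerance to zero, which is legitimate precisely because $\Delta_N$ is uniform over all generators. A secondary technical point is verifying that the LCC reparameterization $G_u(\bh) \approx G_w(\bgamma(\bh))$ from equation~(\ref{eqn: generator}) is consistent between the two infima appearing in Theorem~\ref{theorem: Generalization Bound} and in the present statement; since this approximation error is already absorbed into $\epsilon(d_\mM)$ through Lemma~\ref{lemma: Generator Approximation}, no additional term arises.
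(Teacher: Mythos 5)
Your proposal is correct and takes essentially the same route as the paper's proof: transfer from $\widehat{\mD}_{real}$ to $\mD_{real}$ via the uniform deviation $\sup_{D\in\mF}\bigl|\mmE_{\bx\sim\mD_{real}}[\phi(D(\bx))]-\mmE_{\bx\sim\widehat{\mD}_{real}}[\phi(D(\bx))]\bigr|$, bound that deviation by McDiarmid's inequality plus symmetrization (giving the Rademacher term), and combine with Theorem \ref{theorem: Generalization Bound}, your near-minimizer treatment of the reverse direction being just a cleaner version of the paper's absolute-value/Jensen step. The only caveat --- one the paper's own combination of its inequalities shares --- is that the chain actually produces twice that deviation, i.e.\ $4R_{\mX}(\mF)+4\Delta\sqrt{(2/N)\log(1/\delta)}$, so the advertised constants are recovered only up to the same factor-of-two bookkeeping slip already present in the paper.
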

See supplementary materials for the proof.

Theorem \ref{theorem: generalization_Rademacher} shows that the generalization error of LCC-GANs can be bounded by Rademacher complexity of $\mF $ and an error term $ \epsilon(d_{\mM}) $.
Specifically, 
the former term $ {R}_{\mX}(\mF) $ implies that the set of discriminator should be smaller to have better generalization ability, and also be large enough to be able to identify the data distribution, which is consistent with \cite{zhang2018on}.
The latter term $ \epsilon(d_{\mM}) $ indicates that a small dimensional input is sufficient to achieve good generalization.
In practice, every dataset has its own dimension of the latent manifold. Nevertheless, experiments show that the proposed method is able to generate perceptually convincing images with small dimensional inputs.

\section{Experiments}
We compare LCC-GANs with several state-of-the-arts, namely Vanilla GANs~\cite{radford2015unsupervised}, WGANs~\cite{arjovsky2017wasserstein} and Progressive GANs~\cite{karras2017progressive}.
Here, Vanilla GANs and Progressive GANs are used to implement our LCC-GANs.
For all considered GAN methods, the inputs are sampled from a $ d $-dimensional prior distribution, and we train the generative models to produce $64 \times 64$ images.
All  experiments are conducted on a single Nvidia Titan X GPU.

\noindent \emph{\textbf{Implementation details}}.
We implement LCC-GANs based on PyTorch.\footnote{PyTorch is from http://pytorch.org/.} We follow the experimental settings in DCGANs~\cite{radford2015unsupervised}. Specifically, for the optimization, we use Adam~\cite{kingma2014adam} with a mini-batch size of 64 and a learning rate of 0.0002 to train the generator and the discriminator.
We initialize the parameters of both the generator and the discriminator following the strategy in~\cite{he2015delving}.

\noindent \emph{\textbf{Datasets and evaluation metrics}}.
To thoroughly evaluate the proposed method, we conduct experiments on a wide variety of benchmark datasets, including MNIST~\cite{lecun1998gradient}, Oxford-102~\cite{nilsback2008automated}, LSUN~\cite{song2015construction} and CelebA~\cite{liu2015deep}.
For quantitative comparisons, we adopt the \emph{Inception Score} (IS)~\cite{salimans2016improved} and \emph{Multi-Scale Structural Similarity }(MS-SSIM)~\cite{karras2017progressive} as the performance metrics, which are highly consistent with human evaluations. Inception Score measures both the single image quality and the diversity over a large number of samples (\textit{i.e.}, 50k).
In general, a larger IS value corresponds to the better performance of the method, and a smaller MS-SSIM value corresponds to images with more diversity.

\begin{figure}[h]
	\centering
	\subfigure[Generated samples with $d=3$. The yellow and red boxes denote similar generated digits ``2'' and ``8'', respectively.
	]{
		\label{fig:mnist_low}
		\includegraphics[width = 1\columnwidth]{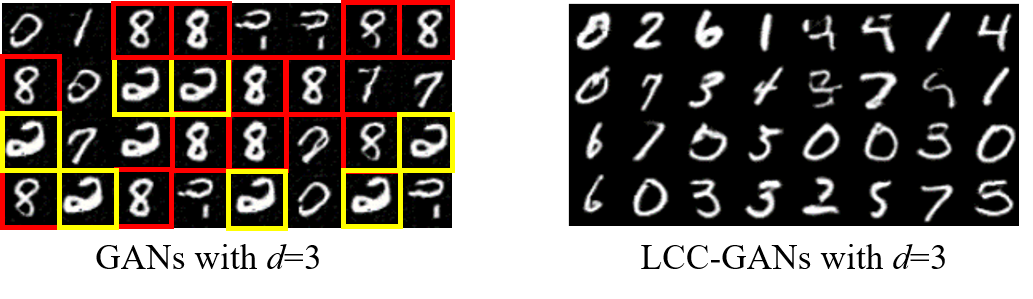}\label{fig:ratio}
	}
	\subfigure[Comparions of different GANs with $d=5$, where GANs with $d=100$ are considered as the baseline.]{
		\label{fig:mnist_high}
		\includegraphics[width = 1\columnwidth]{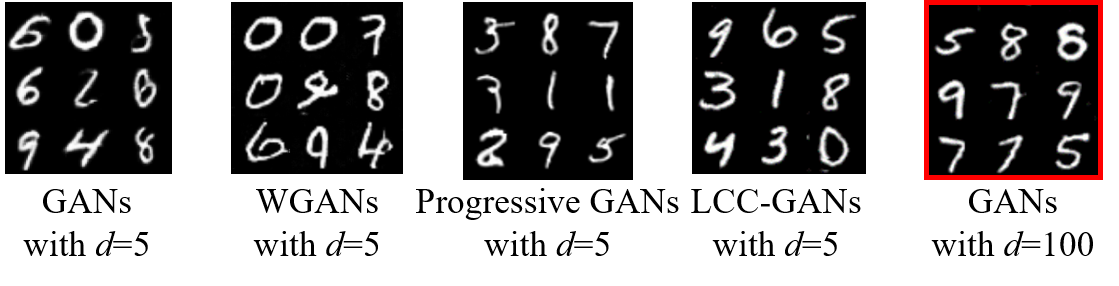}\label{fig:ratio_testing_error}
	}
	\caption{Performance comparisons of various GANs on MNIST.}
	\label{fig:mnist}
\end{figure}

\begin{table*}[htp]
	\small
	\centering
	\caption{Inception scores of various generative models on Oxford-102. For each method, we produce $50,000$ samples for testing.}
	\begin{tabular}{c|ccccc}
		\hline
		Samples & \includegraphics[width=0.3\columnwidth]{./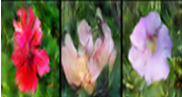}
		&  \includegraphics[width=0.3\columnwidth]{./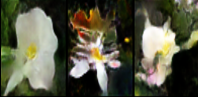}
		&  \includegraphics[width=0.3\columnwidth]{./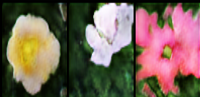}
		&  \includegraphics[width=0.3\columnwidth]{./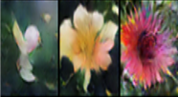}
		&  \includegraphics[width=0.3\columnwidth]{./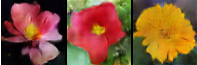} \\
		\hline
		Method & \multicolumn{1}{c}{GANs ($d$=10)} & \multicolumn{1}{c}{WGANs ($d$=10)} & \multicolumn{1}{c}{Progressive GANs ($d$=10)} & \multicolumn{1}{c}{GANs ($d$=100)} & \multicolumn{1}{c}{LCC-GANs ($d$=10)} \\
		\hline
		Scores & $2.21 \pm 0.03$    &    $2.14 \pm 0.02$   &   $2.43 \pm 0.05$   &  2.66 $\pm$ 0.03   &    \textbf{$2.71 \pm 0.03$}     \\
		\hline
	\end{tabular}
	\label{tab:flowers}
\end{table*}

\begin{figure}[t]
	\centering
	\includegraphics[width=1\columnwidth]{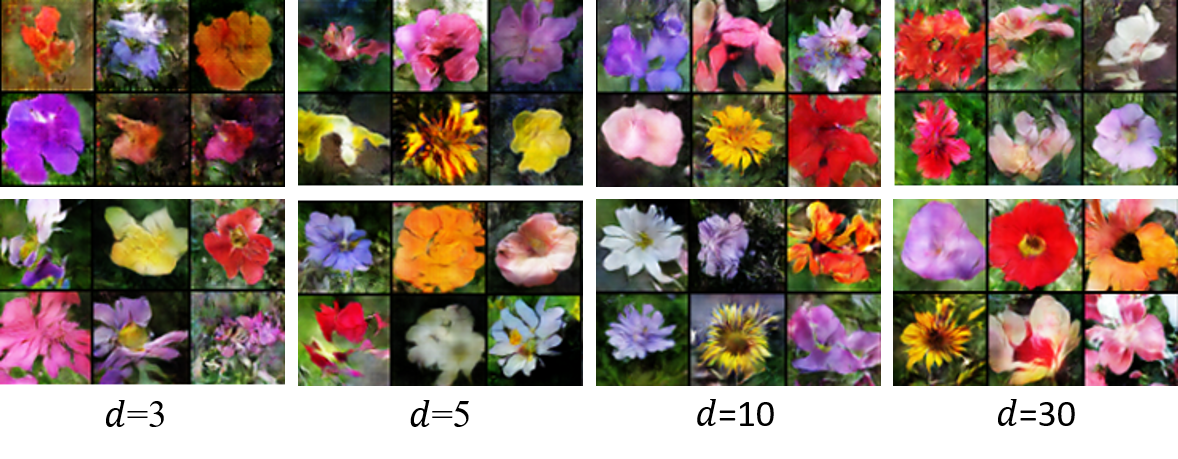}
	\caption{Results of LCC-GANs and Vanilla GANs on Oxford-102. Top: Vanilla GANs. Bottom: LCC-GANs.}
	\label{fig:flower-result}
\end{figure}

\subsection{Results on MNIST}
In this experiment, we evaluate the performance of the proposed method on MNIST~\cite{lecun1998gradient}, which contains handwritten digit images ranging from 0 to 9.
In this small dataset, we adopt Vanilla GANs as the baseline to implement the proposed LCC-GANs.
The visual comparisons are shown in Figure~\ref{fig:mnist}.

From Figure~\ref{fig:mnist_low},
given a very low dimensional input with $d=3$, Vanilla GANs  produce only few kinds of digits with almost the same shapes (see the yellow and red boxes in Figure~\ref{fig:mnist_low}.
In other words, Vanilla GANs produce images with very low diversity. 
In contrast, LCC-GANs with a small dimensional input $d=3$ can produce digits with different styles and different orientations. Equipped with LCC, the proposed LCC-GANs effectively preserve the local information of data on the latent manifold and thus help the training of GANs.

In Figure~\ref{fig:mnist_high}, we increase the dimension of input to $d=5$ and compare the proposed LCC-GANs with other state-of-the-art GAN methods.
In this experiment, the baseline GAN methods often produce
digits with obscure structure.
Nevertheless, the proposed LCC-GANs significantly outperform the considered baseline methods and produce sharp images with high diversity.
More critically, LCC-GANs with $d=5$ are able to achieve comparable or even better performance than their GAN counterparts with $d=100$ (see red box in Figure~\ref{fig:mnist_high}).
These results show the efficacy of the proposed LCC-GANs when training a generative model with the local information of the latent manifold.
Compared to the baseline methods, LCC-GANs only need a relatively low dimensional input to produce visually promising images.

\begin{table}[t]
	\small
	\centering
	\caption{Inception-Score (IS) and MS-SSIM on Oxford-102.}
	\resizebox{0.48\textwidth}{!}{
		\begin{tabular}{c|c|c|c|c|c|c|c|c}	
			\hline 
			\multicolumn{1}{c|}{\multirow{2}[0]{*}{Methods}} & \multicolumn{2}{c|}{$ d=5 $} & \multicolumn{2}{c|}{$ d=10 $} & \multicolumn{2}{c|}{$ d=30 $} & \multicolumn{2}{c}{$ d=100 $}\\
			\cline{2-9}
			& \multicolumn{1}{c|}{IS} & \multicolumn{1}{c|}{SSIM}
			& \multicolumn{1}{c|}{IS} & \multicolumn{1}{c|}{SSIM}
			& \multicolumn{1}{c|}{IS} & \multicolumn{1}{c|}{SSIM}
			& \multicolumn{1}{c|}{IS} & \multicolumn{1}{c}{SSIM} \\
			\hline 
			GANs           & 2.03 & 0.205 & 2.37 & 0.180 & 2.57 & 0.166 & 2.66 & 0.160 \\
			VAE           & 2.14 & 0.203 & 2.38 & 0.185 & 2.54 & 0.163 & 2.68 & 0.162 \\
			Sparse Coding & 2.44 & 0.197 & 2.63 & 0.179 & 2.68 & 0.157 & 2.72 & 0.153 \\
			\hline 
			LCC Coding    & 2.57 & 0.188 & 2.71 & 0.163 & 2.83 & 0.153 & 2.75 & 0.147 \\
			\hline 
		\end{tabular}
	}
	\label{table:representation}
\end{table}
\subsection{Results on Oxford-102 Flowers}
We further evaluate the proposed LCC-GANs on a larger dataset Oxford-102 which contains flower images of 102 categories. 
In this experiment, we adjust the input of generative models with different dimensions, \textit{i.e.}, $d=\{3, 5, 10, 30\}$, and adopt Vanilla GANs to implement the proposed LCC-GANs and investigate the effect of different input dimensions.
The results are shown in Figure~\ref{fig:flower-result}.

From Figure~\ref{fig:flower-result}, we have the following observations. First,
for Vanilla GANs, the performance highly depends on the input dimension. Given a small dimension, \textit{i.e.}, $d=3$ or $d=5$, the GAN models often fail to produce meaningful flowers and obtain images with a blurring structure and distorted regions.
In contrast, LCC-GANs can produce promising images with clear structure given an input with $d=5$.
With such a low dimensional input, LCC-GANs effectively capture the local information of the latent manifold and produce perceptually convincing images.
Second, we further investigate the effect of input dimension. From Figure~\ref{fig:flower-result}, the proposed LCC-GANs consistently outperform their baseline GAN methods given the inputs of different dimensions.

Moreover, we compare the proposed LCC-GANs with several state-of-the-art GAN methods and report the results in Table~\ref{tab:flowers}.
From Table~\ref{tab:flowers}, the proposed LCC-GANs with $d=10$ significantly outperform the other baseline methods and achieve the best performance with a score of 2.71. More critically, LCC-GANs with $d=10$ achieve even better performance than Vanilla GANs with $d=100$, which require the input with much higher dimension.

\textbf{Comparisons of different representation methods. }
On Oxford-102, we compare different representation methods and adopt Inception Score and MS-SSIM to evaluate the quality and diversity of the generated images, respectively.
We adjust the input with different dimensions, \textit{i.e.}, $ d = \{ 5, 10, 30, 100 \} $, and adopt Vanilla GANs to implement LCC-GANs.
The results are shown in Table \ref{table:representation}.

From Table \ref{table:representation}, LCC-GANs consistently outperform other methods with various $ d $ in both measures.
These results show the effectiveness of the proposed LCC-GANs in producing perceptually promising images with higher quality and larger diversity than the considered baselines.

\begin{figure}[t]
	\centering
	\subfigure[Results of LCC-GANs with $d=10$.]{
		\includegraphics[width=1\linewidth]{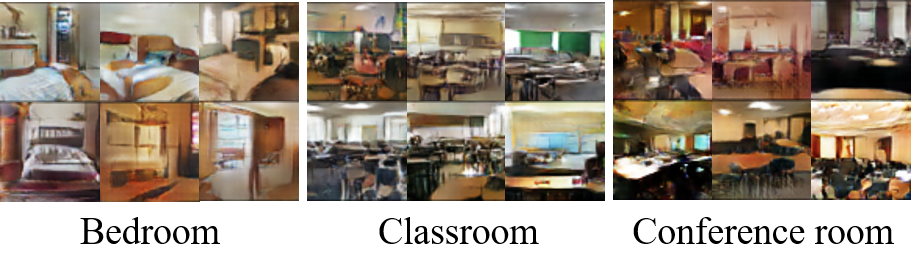}
	}
	\subfigure[Results of Vanilla GANs with $d=10$.]{
		\includegraphics[width=1\linewidth]{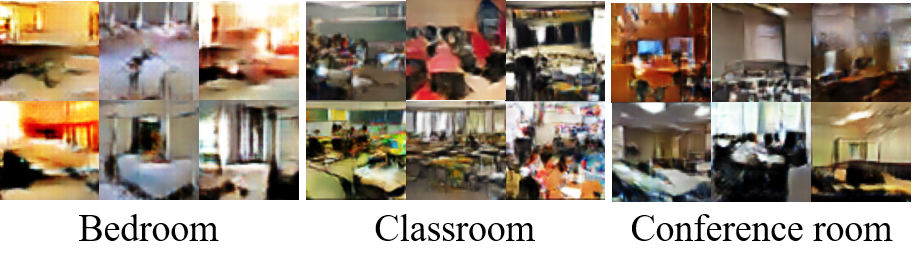}
	}
	\subfigure[Results of Vanilla GANs with $d=100$.]{
		\includegraphics[width=1\linewidth]{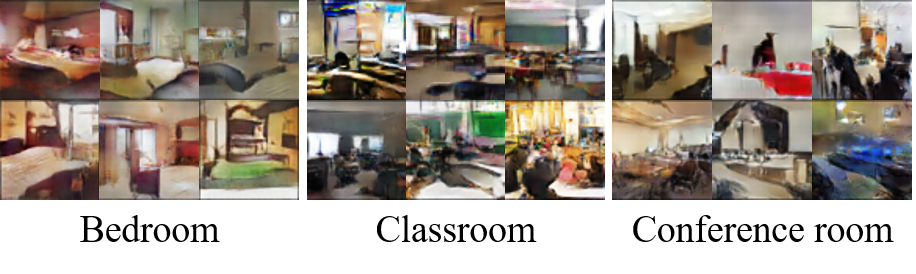}
	}
	\caption{Results of LCC-GANs with Vanilla GANs for different dimensions of the latent distribution on LSUN. }
	\label{fig:lsun}
\end{figure}

\subsection{Results on LSUN}
In this experiment, we evaluate the proposed LCC-GANs on LSUN which is a collection of natural images of indoor scenes. We train the generative models to produce images of different categories, including bedroom, classroom, conference room, etc. In this experiment, we also adopt Vanilla GANs as the baseline models to implement LCC-GANs. We show the visual comparison results in Figure~\ref{fig:lsun}.

\begin{figure}[t]
	\centering
	\subfigure[Results of LCC-GANs with $d = 30$.]{
		\includegraphics[width=1\linewidth]{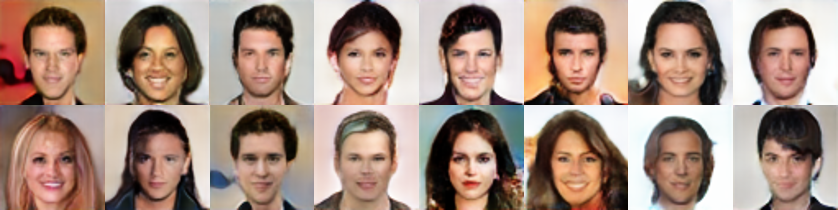}
	}\vspace{10pt}
	\subfigure[Results of Progressive GANs with $d = 30$.]{
		\includegraphics[width=1\linewidth]{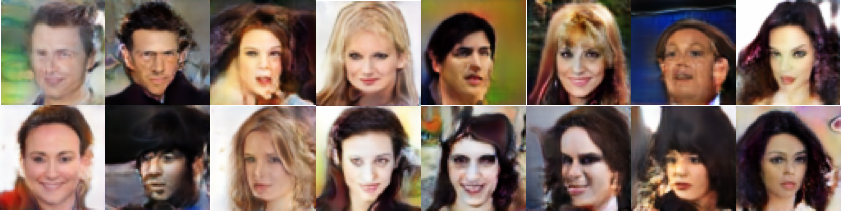}
	}\vspace{10pt}
	\subfigure[Results of Progressive GANs with $d = 100$.]{
		\includegraphics[width=1\linewidth]{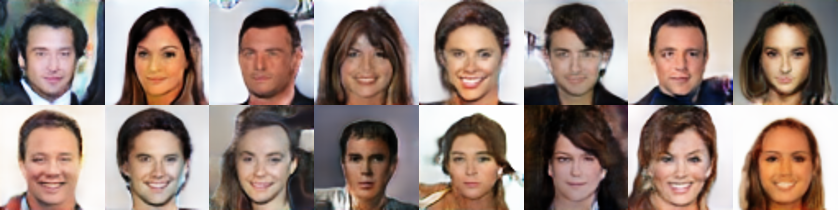}
	}
	\caption{Performance comparisons of LCC-GANs with Progressive GANs. }\label{fig:celeba-lccgan}
\end{figure}

From Figure~\ref{fig:lsun}, when we train the models using an input with a small dimension $d=10$, Vanilla GANs often fail to generate clear and meaningful images. In contrast, LCC-GANs significantly outperform their GANs counterparts and produce images with sharp structure and rich details. Moreover, when generating images of different scenes, LCC-GANs consistently outperform Vanilla GANs.
Note that the scene images in LSUN are much more complex than the images of MNIST and Oxford-102. Therefore, training a generative model can be more difficult. However, with the help of LCC, the proposed LCC-GANs are able to effectively capture the local common features and produce visually convincing images.

In this experiment, we also present the generated samples of Vanilla GANs with a high dimensional input $d=100$. Compared to this method, LCC-GANs only require an input with $d=10$ to produce even better images. In other words, this LCC sampled input effectively preserves the local information of real images on the latent manifold and thus helps the training of GANs.
With the help of LCC sampling, most of the generated images show sharper structure and contain more meaningful details.

\subsection{Results on CelebA}
In this experiment, we evaluate the proposed method on the large-scale dataset CelebA, which is composed of a set of celebrity faces. Here, Progressive GANs \cite{karras2017progressive} are adopted to implement LCC-GANs.
We conduct comparisons and show the results in Figure~\ref{fig:celeba-lccgan}.

Since face images often share a common face outline and only differ in detailed attributes, \textit{e.g.}, hair, eyes, mouth, skin features, it requires an input with a larger dimension to capture the local information.
In this way, we adopt the input with a larger dimension for both Progressive GANs and the proposed LCC-GANs in the training.
From Figure~\ref{fig:celeba-lccgan}, the performance of Progressive GANs degrades severely given an input with a small dimension $d=30$, compared to $d=100$.
However, with the help of LCC coding, the proposed LCC-GANs with the input of $d=30$ are able to produce images of better quality than Progressive GANs with high dimensional inputs of $d=100$.
According to these results, LCC-GANs greatly benefit from the LCC sampling and make the training much easier than directly matching the standard Gaussian distribution.

\begin{table}[t]
	\caption{Generated images from LCC sampling on MNIST, Oxford-102 and CelebA. The last column shows the most similar images in training set to the generated samples on the left.}\label{fig:lcc-sampling}
	\centering
	\includegraphics[width=1\linewidth]{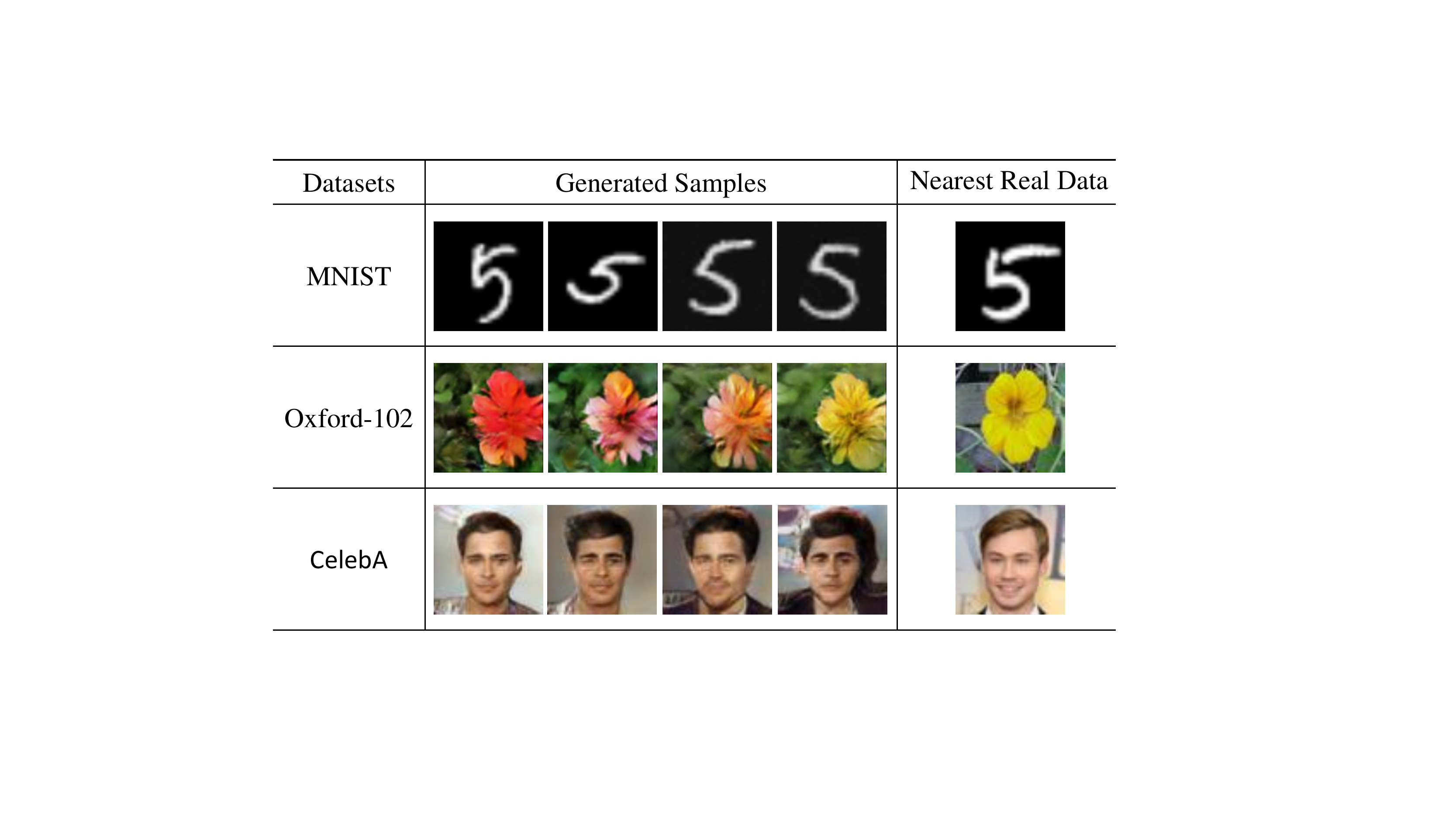} 
\end{table}

\subsection{Demonstration of LCC Sampling}
In this experiment, we investigate the effectiveness of the proposed LCC sampling method.
To achieve this, we can simply fix a specific set of bases and only change the corresponding weights to produce images. Ideally, these images should be located in a local area of the latent manifold and share some common features.

We conduct experiments on LCC sampling and show the results
in Table~\ref{fig:lcc-sampling}. 
The second column of Table~\ref{fig:lcc-sampling} shows the generated images sampled by LCC sampling method on different datasets.
The last column is the real image with the largest similarity to the generated images.
From Table~\ref{fig:lcc-sampling}, 
LCC-GANs produce digits with sharp shapes and different orientations or styles (see the top row in Table~\ref{fig:lcc-sampling}).
Each  generated image contains a digit ``5'' but with obvious individual differences.
In other words, the proposed LCC sampling method is able to generate new data by effectively exploiting the local information on the latent manifold.

When synthesizing flowers and faces, we draw a similar conclusion that verifies the effectiveness of the proposed LCC sampling method.
Specifically, LCC-GANs produce flowers with similar shapes but with different colors. Similarly, LCC-GANs also produce varying face images of promising quality which share some common features. 
These results demonstrate that the proposed LCC-GANs generalize well to unseen data rather than simply memorizing the training samples.

\subsection{More Quantitative Results}
In this experiment, we adopt MS-SSIM as the evaluation measure and compare the proposed LCC-GANs with several GAN methods on four benchmark datasets.
We use Vanilla GANs to implement LCC-GANs. 
To show the superiority of the proposed method, we set $d=30$ for LCC-GANs and $d=100$ for the other baselines. 
Here, we can only report MS-SSIM because Inception Score is no longer a valid measure and may give misleading results on CelebA \cite{barratt2018note}.
The quantitative results are shown in Table \ref{tab:representation}.

From Table \ref{tab:representation}, with a low dimensional input, the proposed method is able to produce images with larger or comparable diversity (smaller MS-SSIM score) than the considered baselines with high dimensional inputs on most datasets.
These results show the effectiveness of the proposed LCC-GANs in generating images with large diversity.

\begin{table}[t]
	\centering
	\caption{MS-SSIM on different datasets. Here, $ d=100 $ is for all baselines and $ d=30 $ for LCC-GANs.}
	\resizebox{0.48\textwidth}{!}{
		\begin{tabular}{c|ccccc}
			\hline 
			Methods         & MNIST & Oxford-102 & LSUN  & CelebA \\
			\hline 
			GANs             & 0.242 & 0.160      & 0.224 & 0.337 \\
			WGANs            & 0.251 & 0.157      & 0.237 & 0.324 \\
			Progressive GANs & 0.239 & 0.151      & 0.213 & 0.308 \\
			\hline
			LCC-GANs         & 0.224 & 0.153      & 0.203 & 0.305 \\
			\hline 
		\end{tabular}
	}
	\label{tab:representation}
\end{table}

\section{Conclusion}
In this paper, we have proposed a novel generative model by exploiting
the local information on the latent manifold of real data to improve GANs using Local Coordinate Coding (LCC). 
Unlike existing methods, based on a generator approximation, we have developed an LCC based sampling method to train GANs. 
In this way, we are able to conduct analysis on the generalization ability of GANs and theoretically prove that a small dimensional input will help to achieve good generalization.
Extensive experiments on several benchmark datasets demonstrate the superiority of the proposed method over the state-of-the-art methods. Specifically, with the proposed LCC sampling, the proposed method outperforms the considered baselines by producing sharper images with higher diversity.

\textbf{Acknowledgments\\}
This work was supported by National Natural Science Foundation of China (NSFC) 61502177 and 61602185, and Recruitment Program for Young Professionals, and Guangdong Provincial Scientific and Technological funds 2017B090901008, 2017A010101011, 2017B090910005, and Fundamental Research Funds for the Central Universities D2172500, D2172480, and Pearl River S\&T Nova Program of Guangzhou 201806010081 and CCF-Tencent Open Research Fund RAGR20170105.


\begin{thebibliography}{26}
\bibitem[Arjovsky et~al.(2017)Arjovsky, Chintala, and
Bottou]{arjovsky2017wasserstein}
Arjovsky, M., Chintala, S., and Bottou, L.
\newblock Wasserstein generative adversarial networks.
\newblock In \emph{International Conference on Machine Learning}, pp.\
214--223, 2017.

\bibitem[Arora et~al.(2017)Arora, Ge, Liang, Ma, and Zhang]{arora2017gans}
Arora, S., Ge, R., Liang, Y., Ma, T., and Zhang, Y.
\newblock Generalization and equilibrium in generative adversarial nets
({GAN}s).
\newblock In \emph{International Conference on Machine Learning}, volume~70,
pp.\  224--232, 2017.

\bibitem[Barratt \& Sharma(2018)Barratt and Sharma]{barratt2018note}
Barratt, S. and Sharma, R.
\newblock A note on the inception score.
\newblock \emph{arXiv preprint arXiv:1801.01973}, 2018.

\bibitem[Bartlett \& Mendelson(2002)Bartlett and
Mendelson]{bartlett2002rademacher}
Bartlett, P.~L. and Mendelson, S.
\newblock Rademacher and gaussian complexities: Risk bounds and structural
results.
\newblock \emph{Journal of Machine Learning Research}, 3\penalty0
(Nov):\penalty0 463--482, 2002.

\bibitem[Goodfellow et~al.(2014)Goodfellow, Pouget-Abadie, Mirza, Xu,
Warde-Farley, Ozair, Courville, and Bengio]{goodfellow2014gans}
Goodfellow, I., Pouget-Abadie, J., Mirza, M., Xu, B., Warde-Farley, D., Ozair,
S., Courville, A., and Bengio, Y.
\newblock Generative adversarial nets.
\newblock In \emph{Advances in Neural Information Processing Systems}, pp.\
2672--2680, 2014.

\bibitem[He et~al.(2015)He, Zhang, Ren, and Sun]{he2015delving}
He, K., Zhang, X., Ren, S., and Sun, J.
\newblock Delving deep into rectifiers: Surpassing human-level performance on
imagenet classification.
\newblock In \emph{International Conference on Computer Vision}, pp.\
1026--1034, 2015.

\bibitem[Hinton \& Salakhutdinov(2006)Hinton and
Salakhutdinov]{hinton2006reducing}
Hinton, G.~E. and Salakhutdinov, R.~R.
\newblock Reducing the dimensionality of data with neural networks.
\newblock \emph{science}, 313\penalty0 (5786):\penalty0 504--507, 2006.

\bibitem[Isola et~al.(2017)Isola, Zhu, Zhou, and Efros]{isola2017image}
Isola, P., Zhu, J.-Y., Zhou, T., and Efros, A.~A.
\newblock Image-to-image translation with conditional adversarial networks.
\newblock In \emph{International Conference on Computer Vision}, 2017.

\bibitem[Karras et~al.(2018)Karras, Aila, Laine, and
Lehtinen]{karras2017progressive}
Karras, T., Aila, T., Laine, S., and Lehtinen, J.
\newblock Progressive growing of gans for improved quality, stability, and
variation.
\newblock In \emph{International Conference on Learning Representations}, 2018.

\bibitem[Kim et~al.(2017)Kim, Cha, Kim, Lee, and Kim]{kim2017learning}
Kim, T., Cha, M., Kim, H., Lee, J., and Kim, J.
\newblock Learning to discover cross-domain relations with generative
adversarial networks.
\newblock In \emph{International Conference on Machine Learning}, 2017.

\bibitem[Kingma \& Ba(2015)Kingma and Ba]{kingma2014adam}
Kingma, D.~P. and Ba, J.
\newblock Adam: A method for stochastic optimization.
\newblock In \emph{International Conference on Learning Representations}, 2015.

\bibitem[Kingma \& Welling(2014)Kingma and Welling]{kingma2013auto}
Kingma, D.~P. and Welling, M.
\newblock Auto-encoding variational bayes.
\newblock In \emph{International Conference on Learning Representations}, 2014.

\bibitem[LeCun et~al.(1998)LeCun, Bottou, Bengio, and
Haffner]{lecun1998gradient}
LeCun, Y., Bottou, L., Bengio, Y., and Haffner, P.
\newblock Gradient-based learning applied to document recognition.
\newblock \emph{Proceedings of the IEEE}, 86\penalty0 (11):\penalty0
2278--2324, 1998.

\bibitem[Liu et~al.(2015)Liu, Luo, Wang, and Tang]{liu2015deep}
Liu, Z., Luo, P., Wang, X., and Tang, X.
\newblock Deep learning face attributes in the wild.
\newblock In \emph{International Conference on Computer Vision}, pp.\
3730--3738, 2015.

\bibitem[Makhzani et~al.(2015)Makhzani, Shlens, Jaitly, Goodfellow, and
Frey]{makhzani2015adversarial}
Makhzani, A., Shlens, J., Jaitly, N., Goodfellow, I., and Frey, B.
\newblock Adversarial autoencoders.
\newblock \emph{arXiv preprint arXiv:1511.05644}, 2015.

\bibitem[Mathieu et~al.(2016)Mathieu, Couprie, and LeCun]{mathieu2015deep}
Mathieu, M., Couprie, C., and LeCun, Y.
\newblock {D}eep {M}ulti-scale {V}ideo {P}rediction beyond {M}ean {S}quare
{E}rror.
\newblock In \emph{International Conference on Learning Representations}, 2016.

\bibitem[Nilsback \& Zisserman(2008)Nilsback and
Zisserman]{nilsback2008automated}
Nilsback, M.-E. and Zisserman, A.
\newblock Automated flower classification over a large number of classes.
\newblock In \emph{Indian Conference on Computer Vision, Graphics and Image
	Processing}, pp.\  722--729, 2008.

\bibitem[Radford et~al.(2015)Radford, Metz, and
Chintala]{radford2015unsupervised}
Radford, A., Metz, L., and Chintala, S.
\newblock Unsupervised representation learning with deep convolutional
generative adversarial networks.
\newblock \emph{arXiv preprint arXiv:1511.06434}, 2015.

\bibitem[Ranzato et~al.(2014)Ranzato, Szlam, Bruna, Mathieu, Collobert, and
Chopra]{ranzato2014video}
Ranzato, M., Szlam, A., Bruna, J., Mathieu, M., Collobert, R., and Chopra, S.
\newblock {V}ideo (language) {M}odeling: a {B}aseline for {G}enerative {M}odels
of {N}atural {V}ideos.
\newblock \emph{arXiv preprint arXiv:1412.6604}, 2014.

\bibitem[Roweis \& Saul(2000)Roweis and Saul]{roweis2000nonlinear}
Roweis, S.~T. and Saul, L.~K.
\newblock Nonlinear dimensionality reduction by locally linear embedding.
\newblock \emph{science}, 290\penalty0 (5500):\penalty0 2323--2326, 2000.

\bibitem[Salimans et~al.(2016)Salimans, Goodfellow, Zaremba, Cheung, Radford,
and Chen]{salimans2016improved}
Salimans, T., Goodfellow, I., Zaremba, W., Cheung, V., Radford, A., and Chen,
X.
\newblock Improved techniques for training gans.
\newblock In \emph{Advances in Neural Information Processing Systems}, pp.\
2234--2242, 2016.

\bibitem[Tenenbaum et~al.(2000)Tenenbaum, De~Silva, and
Langford]{tenenbaum2000global}
Tenenbaum, J.~B., De~Silva, V., and Langford, J.~C.
\newblock A global geometric framework for nonlinear dimensionality reduction.
\newblock \emph{science}, 290\penalty0 (5500):\penalty0 2319--2323, 2000.

\bibitem[Tolstikhin et~al.(2018)Tolstikhin, Olivier, Gelly, and
Schoelkopf]{tolstikhin2018wasserstein}
Tolstikhin, I., Olivier, B., Gelly, S., and Schoelkopf, B.
\newblock Wasserstein auto-encoders.
\newblock In \emph{International Conference on Learning Representations}, 2018.

\bibitem[Yu et~al.(2015)Yu, Seff, Zhang, Song, Funkhouser, and
Xiao]{song2015construction}
Yu, F., Seff, A., Zhang, Y., Song, S., Funkhouser, T., and Xiao, J.
\newblock Construction of a large-scale image dataset using deep learning with
humans in the loop.
\newblock \emph{arXiv preprint arXiv:1506.03365}, 2015.

\bibitem[Yu et~al.(2009)Yu, Zhang, and Gong]{yu2009nonlinear}
Yu, K., Zhang, T., and Gong, Y.
\newblock Nonlinear learning using local coordinate coding.
\newblock In \emph{Advances in Neural Information Processing Systems}, pp.\
2223--2231, 2009.

\bibitem[Zhang et~al.(2018)Zhang, Liu, Zhou, Xu, and He]{zhang2018on}
Zhang, P., Liu, Q., Zhou, D., Xu, T., and He, X.
\newblock On the discrimination-generalization tradeoff in {GAN}s.
\newblock In \emph{International Conference on Learning Representations}, 2018.

\end{thebibliography}
\bibliographystyle{icml2018}

\onecolumn
\vskip 0.3in
\icmltitle{Supplementary Materials for\\ ``Adversarial Learning with Local Coordinate Coding''}

\begin{lemma} \label{lemma: DG Approximation}
	Let $ (\bgamma, \mC) $ be an arbitrary coordinate coding on $ \mmR^{d_B} $.
	Given an $ (L_{\bh}, L_{G}) $-Lipschitz smooth generator $ G_u(\bh) $ and an $ L_{\bx} $-Lipschitz discriminator $ D_{v} $, for all $ \bh \in \mmR^{d_B} $:
	\begin{align*}
		\left| {D}_v(G_u(\bh)) - {D}_v \left(\sum_{\bv} \gamma_{\bv} (\bh) G_u(\bv) \right) \right| 
		\leq L_{\bx} L_{\bh} \| \bh - \br(\bh) \|_2 + L_{\bx} L_G \sum_{\bv \in \mC} |\gamma_{\bv} (\bh)| \| \bv - \br(\bh) \|_2^{2}.
	\end{align*}
\end{lemma}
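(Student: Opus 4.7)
The plan is to peel off the discriminator using its Lipschitz constant and then reduce the problem to a pure statement about the generator. Since $D_v$ is $L_{\bx}$-Lipschitz, we immediately obtain
\[
\left| D_v(G_u(\bh)) - D_v\Big( \sum_{\bv \in \mC} \gamma_{\bv}(\bh) G_u(\bv) \Big) \right| \leq L_{\bx} \left\| G_u(\bh) - \sum_{\bv \in \mC} \gamma_{\bv}(\bh) G_u(\bv) \right\|_2,
\]
so the task reduces to bounding this generator gap by $L_{\bh}\|\bh - \br(\bh)\|_2 + L_G \sum_{\bv} |\gamma_{\bv}(\bh)|\|\bv - \br(\bh)\|_2^2$.

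I would next insert the physical approximation $\br(\bh) = \sum_{\bv} \gamma_{\bv}(\bh)\bv$ as an intermediate point and apply the triangle inequality, splitting the generator gap into two pieces. The first piece $\|G_u(\bh) - G_u(\br(\bh))\|_2$ is immediately controlled by the first-order Lipschitz property of $G_u$: it is bounded by $L_{\bh}\|\bh - \br(\bh)\|_2$, which accounts for the first term of the target bound.

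The second piece $\|G_u(\br(\bh)) - \sum_{\bv} \gamma_{\bv}(\bh) G_u(\bv)\|_2$ requires the second-order Lipschitz property. The key step is to expand each $G_u(\bv)$ around $\br(\bh)$: by the $(L_{\bh}, L_G)$-Lipschitz smoothness,
\[
\|G_u(\bv) - G_u(\br(\bh)) - \nabla G_u(\br(\bh))^{\trsp}(\bv - \br(\bh))\|_2 \leq L_G\|\bv - \br(\bh)\|_2^2.
\]
Taking the $\gamma_{\bv}(\bh)$-weighted combination over $\bv \in \mC$ and invoking the coding constraints $\sum_{\bv}\gamma_{\bv}(\bh)=1$ and $\sum_{\bv}\gamma_{\bv}(\bh)\bv=\br(\bh)$, the constant and gradient contributions collapse exactly, so only $\sum_{\bv}\gamma_{\bv}(\bh) G_u(\bv) - G_u(\br(\bh))$ survives inside the norm. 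A final triangle inequality over the weights then yields the bound $L_G \sum_{\bv}|\gamma_{\bv}(\bh)|\|\bv - \br(\bh)\|_2^2$.

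Combining the two pieces and carrying along the prefactor $L_{\bx}$ gives the claimed inequality. The only subtle point is the algebraic cancellation of the linear (gradient) term in the weighted Taylor expansion, which relies crucially on the normalization $\sum_{\bv}\gamma_{\bv}(\bh) = 1$ (rather than $\sum_{\bv}|\gamma_{\bv}(\bh)| = 1$); the triangle inequality that introduces the absolute values $|\gamma_{\bv}(\bh)|$ must be applied \emph{after} this exact cancellation, otherwise one loses the match with the claimed coefficient.
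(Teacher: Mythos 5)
Your proposal is correct and follows essentially the same route as the paper's proof: insert the physical approximation $\br(\bh)$ as the intermediate point, bound the first piece by the first-order Lipschitz property of $G_u$, and handle the second piece via the weighted second-order Taylor bound, where the gradient term cancels thanks to $\sum_{\bv}\gamma_{\bv}(\bh)=1$ and $\sum_{\bv}\gamma_{\bv}(\bh)\bv=\br(\bh)$. The only cosmetic difference is that you apply the $L_{\bx}$-Lipschitz property of $D_v$ once up front and then use the triangle inequality inside the norm, whereas the paper splits at $D_v(G_u(\br(\bh)))$ first and applies the Lipschitz bound to each piece; the two orderings are equivalent and give the identical bound.
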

\begin{proof}
	Given an $ (L_{\bh}, L_{G}) $-Lipschitz smooth generator $ G_u(\bh) $, an $ L_{\bx} $-Lipschitz discriminator $ D_{v} $, and let $ \gamma_{\bv} = \gamma_{\bv}(\bh) $ and $ \bh' = \br(\bh) = \sum_{\bv \in \mC} \gamma_{\bv} \bv $. We have
	\begin{align*}
		&\left| \widetilde{D}_v(G_u(\bh)) - \widetilde{D}_v \left(\sum_{\bv} \gamma_{\bv} (\bh) G_u(\bv) \right) \right| \\
		=& \left| {D}_v(G_u(\bh)) - {D}_v \left(\sum_{\bv} \gamma_{\bv} (\bh) G_u(\bv) \right) \right| \\
		=& \left| {D}_v(G_u(\bh)) - {D}_v(G_u(\bh')) - \left( {D}_v \left(\sum_{\bv} \gamma_{\bv} (\bh) G_u(\bv) \right) - {D}_v(G_u(\bh')) \right)  \right| \\
		\leq& \left| {D}_v \left( G_u(\bh) \right) - {D}_v \left( G_u(\bh') \right) \right| + \left| {D}_v \left(\sum_{\bv} \gamma_{\bv} (\bh) G_u(\bv) \right) - {D}_v \left( G_u(\bh') \right)  \right| \\
		\leq& L_{\bx} \left\| G_u(\bh) - G_u(\bh') \right\|_2 + L_{\bx} \left\| \sum_{\bv} \gamma_{\bv} (\bh) G_u(\bv) - G_u(\bh') \right\|_2 \\
		\leq& L_{\bx} \left\| G_u(\bh) - G_u(\bh') \right\|_2 + L_{\bx} \left\| \sum_{\bv} \gamma_{\bv} (\bh) \left( G_u(\bv) - G_u(\bh') - \Delta G_u (\bh')^{\trsp} \left( \bv - \bh' \right) \right)  \right\|_2 \\
		\leq& L_{\bx} \left\| G_u(\bh) - G_u(\bh') \right\|_2 + L_{\bx} \sum_{\bv \in \mC} |\gamma_{\bv}| \left\| G_u (\bv) - G_u(\bh') - \Delta G_u(\bh')^{\trsp} (\bv - \bh') \right\|_2 \\
		\leq& L_{\bx} L_{\bh} \| \bh - \bh' \|_2 + L_{\bx} L_{G} \sum_{\bv \in \mC} |\gamma_{\bv} | \| \bv - \bh' \|_2^2 \\
		=& L_{\bx} L_{\bh} \| \bh -  \br(\bh) \|_2 + L_{\bx} L_{G} \sum_{\bv \in \mC} |\gamma_{\bv} | \| \bv - \br(\bh) \|_2^2,
	\end{align*}
	where $ \widetilde{D}_v(\cdot) = 1 - D_v(\cdot) $. In the above derivation, the first inequality holds by the triangle inequality. The second inequality uses an assumption that $ D_v $ is Lipschitz smooth w.r.t. the input. The third inequality uses the facts that $ \sum_{\bv \in \mC} \gamma_{\bv} (\bx) = 1 $ and $ \bh' = \sum_{\bv \in \mC} \gamma_{\bv} \bv $. The last inequality uses the $ (L_{\bh}, L_{G}) $-Lipschitz smooth generator $ G_u $, that is
	\begin{align*}
		\left\| G_u (\bv) - G_u(\bh') - \Delta G_u(\bh')^{\trsp} (\bv - \bh') \right\|_2 \leq L_{G} \| \bv - \bh' \|_2^2.
	\end{align*}
	This implies the bound.
\end{proof}

\newpage
\section{Proof of Lemma \ref{lemma: Generator Approximation}}
\begin{*lemma}{\emph{\textbf{\ref{lemma: Generator Approximation}} }} 
	\textbf{\emph{(Generator Approximation) }} 
	Let $ (\bgamma, \mC) $ be an arbitrary coordinate coding on $ \mmR^{d_B} $.
	Given a Lipschitz smooth generator $ G_u(\bh) $, for all $ \bh \in \mmR^{d_B} $:
	\begin{align*}
		\left\| G_u\left(\sum_{\bv \in \mC} \gamma_{\bv}(\bh) \bv\right) - \sum_{\bv \in \mC} \gamma_{\bv} (\bh) G_u(\bv) \right\|_2 
		\leq 2L_{\bh} \| \bh - \br(\bh) \|_2 + L_G \sum_{\bv \in \mC} |\gamma_{\bv} (\bh)| \| \bv - \br(\bh) \|_2^{2}.
	\end{align*}
\end{*lemma}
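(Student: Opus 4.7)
The plan is to add and subtract the intermediate quantity $G_u(\bh)$, splitting the target quantity $\| G_u(\br(\bh)) - \sum_{\bv} \gamma_{\bv}(\bh) G_u(\bv) \|_2$ by the triangle inequality into two pieces: (i) $\| G_u(\br(\bh)) - G_u(\bh) \|_2$, which measures the drift of the generator between the true latent point and its physical approximation, and (ii) $\| G_u(\bh) - \sum_{\bv} \gamma_{\bv}(\bh) G_u(\bv) \|_2$, which measures how well the linear combination of bases-evaluated generators approximates the generator at $\bh$. The first piece is controlled immediately by $L_{\bh}$-Lipschitzness, yielding $L_{\bh}\|\bh - \br(\bh)\|_2$.

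For the second piece, I would exploit the two defining properties of the coding $\sum_{\bv} \gamma_{\bv}(\bh) = 1$ and $\sum_{\bv} \gamma_{\bv}(\bh)\bv = \br(\bh)$. Taylor-expand each $G_u(\bv)$ around $\br(\bh)$:
\begin{equation*}
G_u(\bv) = G_u(\br(\bh)) + \nabla G_u(\br(\bh))^{\trsp} (\bv - \br(\bh)) + R_{\bv},
\end{equation*}
where by $L_G$-Lipschitz smoothness $\|R_{\bv}\|_2 \leq L_G \|\bv - \br(\bh)\|_2^2$. Taking the $\gamma_{\bv}(\bh)$-weighted combination, the constant term reproduces $G_u(\br(\bh))$, and the linear term collapses because $\sum_{\bv} \gamma_{\bv}(\bh)(\bv - \br(\bh)) = \br(\bh) - \br(\bh) = 0$. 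Hence $\sum_{\bv} \gamma_{\bv}(\bh) G_u(\bv) = G_u(\br(\bh)) + \sum_{\bv} \gamma_{\bv}(\bh) R_{\bv}$, and another triangle inequality gives
\begin{equation*}
\| G_u(\bh) - \sum_{\bv} \gamma_{\bv}(\bh) G_u(\bv) \|_2 \leq L_{\bh}\|\bh - \br(\bh)\|_2 + L_G \sum_{\bv \in \mC} |\gamma_{\bv}(\bh)| \|\bv - \br(\bh)\|_2^2.
\end{equation*}

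Combining the two pieces produces the factor $2 L_{\bh}\|\bh - \br(\bh)\|_2$ together with the quadratic remainder term, matching the lemma exactly. I do not anticipate a serious obstacle: the only subtlety is recognizing that the natural split routes through $G_u(\bh)$ rather than staying at $G_u(\br(\bh))$ (which would give only the quadratic term), and carefully using both normalization and the definition $\br(\bh) = \sum_{\bv} \gamma_{\bv}(\bh)\bv$ to eliminate the first-order Taylor term. Everything else is routine Lipschitz bookkeeping analogous to the auxiliary lemma appearing in the supplement.
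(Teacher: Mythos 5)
Your proposal is correct and follows essentially the same route as the paper: the paper also splits through $G_u(\bh)$ by the triangle inequality, bounds $\|G_u(\br(\bh))-G_u(\bh)\|_2$ by $L_{\bh}$-Lipschitzness, and controls $\|G_u(\bh)-\sum_{\bv}\gamma_{\bv}(\bh)G_u(\bv)\|_2$ via an auxiliary lemma (stated for a Lipschitz discriminator, applied with the identity) whose proof is exactly your Taylor expansion around $\br(\bh)$ using $\sum_{\bv}\gamma_{\bv}(\bh)=1$ and $\sum_{\bv}\gamma_{\bv}(\bh)\bv=\br(\bh)$ to kill the first-order term. You simply inline that auxiliary step rather than invoking it as a separate lemma, which changes nothing mathematically.
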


\begin{proof}
	From Lemma \ref{lemma: DG Approximation}, when the discriminator is identity function: $ D_v(t) = t $, that is
	\begin{align*}
		\left| {D}_v(G_u(\bh)) - {D}_v \left(\sum_{\bv} \gamma_{\bv} (\bh) G_u(\bv) \right) \right| &=
		\left\| G_u(\bh) - \sum_{\bv} \gamma_{\bv} (\bh) G_u(\bv) \right\|_2 \\
		&\leq L_{\bh} \| \bh -  \br(\bh) \|_2 + L_{G} \sum_{\bv \in \mC} |\gamma_{\bv} | \| \bv - \br(\bh) \|_2^2,
	\end{align*}
	then, we have
	\begin{align*}
		\left\| G_u \left(\sum_{\bv \in \mC} \gamma_{\bv}(\bh) \bv\right) - \sum_{\bv \in \mC} \gamma_{\bv} (\bh) G_u(\bv) \right\|_2
		&= \left\| G_u \left(\sum_{\bv \in \mC} \gamma_{\bv}(\bh) \bv\right) - G_u\left(\bh\right) + G_u\left(\bh\right) - \sum_{\bv \in \mC} \gamma_{\bv} (\bh) G_u(\bv) \right\|_2 \\
		&\leq \left\| G_u \left(\sum_{\bv \in \mC} \gamma_{\bv}(\bh) \bv\right) - G_u\left(\bh\right) \right\|_2 + \left\| G_u\left(\bh\right) - \sum_{\bv \in \mC} \gamma_{\bv} (\bh) G_u(\bv) \right\|_2 \\
		&\leq 2L_{\bh} \| \bh - \br(\bh) \|_2 + L_G \sum_{\bv \in \mC} |\gamma_{\bv} (\bh)| \| \bv - \br(\bh) \|_2^{2},
	\end{align*}
	where $ \br(\bh) = \sum_{\bv \in \mC} \gamma_{\bv}(\bh) \bv $.
\end{proof}

\section{Proof of Theorem \ref{theorem: Generalization Bound}}
In order to provide a generalization bound w.r.t. the neural net distance, we first give some relevant lemmas and theorems.
When the latent points lie on a latent manifold and the generator is Lipschitz smooth,  $ Q_{L_{\bh}, L_{G}} (\bgamma, \mC) $ has a bound as follows.
\begin{lemma}\textbf{\emph{(Manifold Coding \cite{yu2009nonlinear})}}
	\label{lemma: manifold_coding}
	If the latent points lie on a compact smooth manifold $ \mM $, given an $ (L_{\bh}, L_{G}) $-Lipschitz smooth generator $ G_u(\bh) $ and any $ \epsilon > 0 $, then there exist anchor points $ \mC \subset \mM $ and coding $ \bgamma $ such that 
	\begin{align*}
		Q_{L_{\bh}, L_{G}} (\bgamma, \mC)
		\leq \left[ L_{\bh}  c_{\mM} + \left(1 + \sqrt{d_{\mM}} + 4 \sqrt{d_{\mM}} \right) L_G \right] \epsilon^{2}.
	\end{align*}
\end{lemma}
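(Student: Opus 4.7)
The plan is to follow the original Yu--Zhang--Gong construction: build an $\epsilon$-net of anchor points on the manifold, enrich it with auxiliary anchors aligned along the tangent directions at each net point, and for any target $\bh$ use a coding supported on the nearest net point together with its tangent-direction neighbors. Bounding
$$Q_{L_{\bh}, L_G}(\bgamma, \mC) = 2 L_{\bh}\|\bh - \br(\bh)\|_2 + L_G \sum_{\bv \in \mC}|\gamma_{\bv}(\bh)|\,\|\bv - \br(\bh)\|_2^2$$
then decomposes into controlling the pointwise manifold-approximation error and a quadratic locality penalty.

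First I would construct $\mC$. Since $\mM$ is compact, a finite $\epsilon$-net $\{\bh_0^{(i)}\}$ exists. For each $\bh_0^{(i)}$, Definition \ref{definition: manifold} provides tangent bases $\bv_1(\bh_0^{(i)}),\ldots,\bv_{d_{\mM}}(\bh_0^{(i)})$, which I take orthonormal without loss of generality. Add to $\mC$ the projections onto $\mM$ of the displaced points $\bh_0^{(i)} \pm \epsilon \bv_j(\bh_0^{(i)})$; the manifold regularity ensures that these projections lie within $O(\epsilon^2)$ of the ideal tangent displacements, an error that can be absorbed into the curvature constant $c_{\mM}$.

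Next, for any $\bh \in \mM$, let $\bh_0$ be its nearest net point, so $\|\bh - \bh_0\|_2 \leq \epsilon$. Applying Definition \ref{definition: manifold} at $\bh_0$ gives coefficients $\alpha_j$ with $\|\bh - \bh_0 - \sum_j \alpha_j \bv_j(\bh_0)\|_2 \leq c_{\mM}\|\bh - \bh_0\|_2^2 \leq c_{\mM}\epsilon^2$. Orthonormality yields $\sum_j \alpha_j^2 \leq \epsilon^2$, hence $\sum_j|\alpha_j| \leq \sqrt{d_{\mM}}\,\epsilon$ by Cauchy--Schwarz. Define the coding to be supported only on $\bh_0$ and on the $d_{\mM}$ one-sided tangent-direction anchors $\bv_j^{\mathrm{sgn}(\alpha_j)}$: assign weight $|\alpha_j|/\epsilon$ to each such anchor and the complementary weight $1 - \sum_j|\alpha_j|/\epsilon$ to $\bh_0$, so $\sum_{\bv}\gamma_{\bv}(\bh)=1$ as required. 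Then $\br(\bh) = \bh_0 + \sum_j \alpha_j \bv_j(\bh_0) + O(\epsilon^2)$, bounding the first term of $Q$ by $L_{\bh} c_{\mM}\epsilon^2$ (absorbing the factor of $2$ into an $\epsilon$-rescaling or into $c_{\mM}$).

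For the second term I would split the sum into the contribution of $\bh_0$ and that of the tangent-direction anchors. The center contributes $|\gamma_{\bh_0}|\cdot\|\bh_0 - \br(\bh)\|_2^2$; since $\|\bh_0 - \br(\bh)\|_2 \leq \sum_j |\alpha_j| \leq \sqrt{d_{\mM}}\,\epsilon$ and $|\gamma_{\bh_0}| \leq 1$, this produces the $\sqrt{d_{\mM}}\,L_G\,\epsilon^2$ term, while the remaining $L_G\epsilon^2$ term comes from the isolated first-order contribution of $\bh_0$ itself. Each tangent-direction anchor $\bv$ satisfies $\|\bv - \br(\bh)\|_2 \leq \|\bv - \bh_0\|_2 + \|\bh_0 - \br(\bh)\|_2 \leq (1+\sqrt{d_{\mM}})\epsilon$; squaring, weighting by $|\gamma_{\bv}| = |\alpha_j|/\epsilon$, and summing over $j$ using $\sum_j|\alpha_j|/\epsilon \leq \sqrt{d_{\mM}}$ yields the $4\sqrt{d_{\mM}}\,L_G\,\epsilon^2$ contribution. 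Combining with the first-term bound produces the coefficient $L_{\bh} c_{\mM} + (1 + \sqrt{d_{\mM}} + 4\sqrt{d_{\mM}}) L_G$ as stated.

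The main obstacle will be matching the exact constants $1$, $\sqrt{d_{\mM}}$, and $4\sqrt{d_{\mM}}$ rather than cruder polynomial-in-$d_{\mM}$ bounds; this hinges on choosing the tangent basis orthonormal (so Cauchy--Schwarz gives $\sqrt{d_{\mM}}$ instead of $d_{\mM}$), on splitting $\|\bv - \br(\bh)\|_2$ via the triangle inequality through $\bh_0$, and on accounting for the three sources of error (manifold curvature, center-anchor locality, tangent-anchor locality) without double-counting. A secondary technical issue is verifying that the $O(\epsilon^2)$ slippage from projecting tangent-displaced points back onto $\mM$ can be absorbed cleanly into $c_{\mM}$ without inflating the $L_G$ coefficients.
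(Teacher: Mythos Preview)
The paper does not supply its own proof of this lemma: it is stated with attribution to \cite{yu2009nonlinear} and simply invoked as a black box in the proof of Theorem~\ref{theorem: Generalization Bound}. There is therefore nothing in the paper to compare your argument against; you are reconstructing the Yu--Zhang--Gong proof, which is the right thing to do, and your outline matches the structure of their original argument (an $\epsilon$-net on $\mM$, tangent-direction auxiliary anchors, coding supported on the nearest net point and its $d_{\mM}$ neighbors, Cauchy--Schwarz to extract the $\sqrt{d_{\mM}}$ factors).

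One discrepancy worth flagging: you write $Q_{L_{\bh},L_G}(\bgamma,\mC)$ with a leading $2L_{\bh}\|\bh-\br(\bh)\|_2$, matching the right-hand side of Lemma~\ref{lemma: Generator Approximation}, but in the proof of Theorem~\ref{theorem: Generalization Bound} the paper defines $Q$ without the factor of~$2$, namely $Q_{L_{\bh},L_G}(\bgamma,\mC)=\mmE_{\bh}[\,L_{\bh}\|\bh-\br(\bh)\|_2 + L_G\sum_{\bv}|\gamma_{\bv}|\,\|\bv-\br(\bh)\|_2^2\,]$. With that definition the $L_{\bh}c_{\mM}\epsilon^2$ term drops out directly and you do not need the ``absorb the factor of 2'' maneuver, which as written is the loosest step in your sketch. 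Otherwise your decomposition into the curvature term, the center-anchor contribution, and the tangent-anchor contribution is the correct accounting, and your identification of where each of $1$, $\sqrt{d_{\mM}}$, $4\sqrt{d_{\mM}}$ comes from is on target.
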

Lemma \ref{lemma: manifold_coding} shows that the complexity of local coordinate coding depends on the intrinsic dimension of the manifold instead of the dimension of the basis.

\begin{*thm}{\textbf{\emph{\ref{theorem: Generalization Bound}} }}
	Suppose measuring function $ \phi(\cdot) $ is Lipschitz smooth: $ | \phi'(\cdot) | \leq L_{\phi} $, and bounded in $ [-\Delta, \Delta] $. Consider coordinate coding $ (\bgamma, \mC) $, an example set $ \mH $ in latent space and the empirical distribution $ \widehat{\mD}_{real} $, if the generator is Lipschitz smooth, then the expected generalization error satisfies the inequality:
	\begin{align*}
		\mmE_{\mH} \left[ d_{\mF, \phi} \left(\widehat{\mD}_{G_{\widehat{w}}\left( \bgamma(\bh) \right)}, \widehat{\mD}_{real} \right)
		\right] 
		\leq \inf_{ \mG } \mmE_{\mH} \left[ d_{\mF, \phi} \left( {\mD}_{ G_{u} (\bh)}, \widehat{\mD}_{real} \right) \right] + \epsilon(d_{\mM}),
	\end{align*}
	where $ \epsilon(d_{\mM}) = L_{\phi} Q_{L_{\bh}, L_{G}} (\bgamma, \mC) + 2\Delta $, and generative quality $ Q_{L_{\bh}, L_{G}} (\bgamma, \mC) $ is bounded w.r.t. $ d_{\mM} $ in Lemma \ref{lemma: manifold_coding} of supplementary material.
\end{*thm}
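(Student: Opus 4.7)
The plan is to combine the generator-approximation inequality of Lemma 1 with the Lipschitz and boundedness hypotheses on $\phi$ and the optimality of $\widehat{w}$. The high-level strategy is: for any comparator $G_u \in \mG$, produce an LCC-parameterized proxy whose empirical objective is close to that of $G_u$, invoke the fact that $\widehat{w}$ minimizes the LCC-GAN objective, and finally pay a $2\Delta$ slack to pass from the empirical generated distribution to its population counterpart.

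First, fix any $G_u \in \mG$ and define its LCC-induced proxy $G_w(\bgamma(\bh)) := \sum_{\bv \in \mC} \gamma_\bv(\bh)\, G_u(\bv)$. For every $\bh \in \mH$ and every $D_v \in \mF$, Lemma 1 bounds $\|G_u(\bh) - G_w(\bgamma(\bh))\|_2$ by exactly the expression that defines $Q_{L_{\bh},L_G}(\bgamma, \mC)$ (after averaging over $\bh$), and $L_\phi$-Lipschitzness of $\phi$ together with the standard Lipschitz regularity of the discriminator network $\widetilde{D}_v$ (as exploited in Lemma 2 of the supplement) converts the geometric error into a pointwise bound on $|\phi(\widetilde{D}_v(G_u(\bh))) - \phi(\widetilde{D}_v(G_w(\bgamma(\bh))))|$. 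Averaging over $\bh \in \mH$ and taking the supremum over $D_v \in \mF$ on both sides yields the key transfer inequality
\begin{equation*}
d_{\mF,\phi}\bigl(\widehat{\mD}_{G_w(\bgamma(\bh))}, \widehat{\mD}_{real}\bigr) \leq d_{\mF,\phi}\bigl(\widehat{\mD}_{G_u(\bh)}, \widehat{\mD}_{real}\bigr) + L_\phi\, Q_{L_{\bh},L_G}(\bgamma, \mC).
\end{equation*}

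Second, since $\widehat{w}$ is the empirical minimizer, $d_{\mF,\phi}(\widehat{\mD}_{G_{\widehat w}(\bgamma(\bh))}, \widehat{\mD}_{real}) \leq d_{\mF,\phi}(\widehat{\mD}_{G_w(\bgamma(\bh))}, \widehat{\mD}_{real})$ for the specific $w$ constructed above. To replace $\widehat{\mD}_{G_u(\bh)}$ on the right by the population $\mD_{G_u(\bh)}$, I use the $[-\Delta,\Delta]$-boundedness of $\phi$ together with the elementary inequality $|\sup_v A_v - \sup_v B_v| \leq \sup_v |A_v - B_v|$: the empirical-versus-population discrepancy in the generator term of the neural network distance is uniformly at most $2\Delta$, giving
\begin{equation*}
d_{\mF,\phi}\bigl(\widehat{\mD}_{G_u(\bh)}, \widehat{\mD}_{real}\bigr) \leq d_{\mF,\phi}\bigl(\mD_{G_u(\bh)}, \widehat{\mD}_{real}\bigr) + 2\Delta.
\end{equation*}
Chaining the two inequalities, taking expectation over $\mH$, and finally the infimum over $G_u \in \mG$ produces the claimed bound with $\epsilon(d_{\mM}) = L_\phi\, Q_{L_{\bh},L_G}(\bgamma, \mC) + 2\Delta$.

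The main obstacle I expect is the passage from the $\ell_2$-level approximation guaranteed by Lemma 1 to a loss-level bound involving $\phi \circ \widetilde{D}_v$: this requires Lipschitz control of the composition, i.e., of the discriminator in its input, which is not listed as a hypothesis of the theorem itself but is quietly imported from supplementary Lemma 2. Some care is also needed when taking the infimum over $\mG$, because the proxy $G_w$ and the comparator $G_u$ are linked through $G_u$'s values at the bases $\mC$, so the transfer inequality must be uniform in $u$ with a slack depending only on $(\bgamma, \mC)$ and the Lipschitz constants. Once that is in place, the translation from $Q_{L_{\bh}, L_G}(\bgamma, \mC)$ to a dependence on the intrinsic dimension $d_{\mM}$ is delegated to Lemma 3 of the supplement, which justifies the notation $\epsilon(d_{\mM})$ in the final bound.
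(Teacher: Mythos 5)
Your proposal is essentially sound, but it takes a genuinely different route from the paper's. The paper proves the bound by a leave-one-out stability argument: it draws $n+1$ independent latent sample sets $\mH^{(1)},\ldots,\mH^{(n+1)}$, lets $\widetilde{w}$ minimize the averaged empirical objective and $\widehat{w}^{(k)}$ the objective with the $k$-th set left out, and the $2\Delta$ term pays for swapping $\widehat{w}^{(k)}$ with $\widetilde{w}$ on the held-out set $\mH^{(k)}$ (boundedness of $\phi$); the comparison with an arbitrary $G_u$ then goes through the optimality of $\widetilde{w}$ plus the same transfer step you use, and the expectation over $\mH$ is taken only at the end. You instead take $\widehat{w}$ to be the empirical minimizer on the very sample $\mH$ appearing on the left-hand side, compare it to the proxy $\sum_{\bv\in\mC}\gamma_{\bv}(\bh)G_u(\bv)$ directly, and spend the $2\Delta$ on the crude empirical-to-population conversion of the comparator term; this is simpler (no $(n+1)$-fold replication, no stability step), while the paper's version has $\widehat{w}$ trained on independent latent draws and evaluated on a fresh set, which is a slightly stronger, more ``out-of-sample'' reading of the left-hand side — the theorem statement does not disambiguate, so your chain does yield the displayed inequality, but the two claims are not identical. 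Both arguments share the two tacit assumptions you correctly flag: realizability of the proxy within the LCC-parameterized family (the paper's ``since $\widetilde{w}$ is the minimizer'' step needs it just as your optimality step does), and Lipschitzness of $D_v$ in its input, which the paper silently normalizes to $1$; keeping $L_{\bx}$ explicit, your error term is $L_{\phi}L_{\bx}Q_{L_{\bh},L_G}(\bgamma,\mC)+2\Delta$ rather than the stated constant. One bookkeeping correction: the quantity $Q_{L_{\bh},L_G}(\bgamma,\mC)$ used in the theorem matches the bound of supplementary Lemma~\ref{lemma: DG Approximation} with identity discriminator (coefficient $L_{\bh}\|\bh-\br(\bh)\|_2$), not Lemma~\ref{lemma: Generator Approximation} (coefficient $2L_{\bh}$), so the pointwise estimate you should cite for the transfer inequality is the former.
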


\begin{proof}
	Let $ \mH^{(k)} = \left\{ \bh_1^{(k)}, \bh_2^{(k)}, \ldots, \bh_r^{(k)} \right\} $ be a set of $ r $ latent samples which lie on the latent distribution. Consider $ n+1 $ independent experiments over the latent distribution, we have $ {\mH}_{r, n+1} = \left\{ \mH^{(1)}, \mH^{(2)}, \ldots, \mH^{(n+1)} \right\} $. Recall the optimization problem, we consider an empirical version of the expected loss:
	\begin{align}\label{optimization}
		[\widetilde{w}] = \argmin_{[w]} \left[ \frac{1}{n} \sum_{i=1}^{n+1} d_{\mF, \phi} \left({\mD}_{G_{w, \mH^{(i)}} (\bgamma(\bh))}, \widehat{\mD}_{real} \right) \right].
	\end{align}
	
	Let $ k $ be an integer randomly drawn from $ \{1, 2, \ldots, n+1\} $. Let $ \left[ \widehat{w}^{(k)} \right] $ be the solution of
	\begin{align}
		\left[\widehat{w}^{(k)}\right] = \argmin_{[w]} \left[ \frac{1}{n} \sum_{i\neq k}^{n+1} d_{\mF, \phi} \left({\mD}_{G_{w, \mH^{(i)}} (\bgamma(\bh))}, \widehat{\mD}_{real} \right) \right],
	\end{align}
	with the $ k $-th example left-out.
	
	Recall the definition of the neural net distance, we have
	\begin{align*}
		d_{\mF, \phi} (\mu, \nu) = \sup\limits_{\mF} \left| \mathop \mmE\limits_{\bx \sim \mu} \left[ \phi(D_v(\bx)) \right] + \mathop \mmE\limits_{\bx \sim \nu} \left[ \phi(\widetilde{D}_v (\bx)) \right] \right|,
	\end{align*}
	where $ \mF = \{ D_v, v \in \mV \} $.
	Given the $ k $-th sample experiment, the same real distribution $ \widehat{\mD}_{real} $ over the training samples $ \bx_1, \bx_2, \ldots, \bx_m $, and two different distributions generated by $ {G_{\widehat{w}^{(k)}, \mH^{(k)}}\left(\bgamma(\bh)\right)} $ and $ {G_{\widetilde{w}, \mH^{(k)}}\left(\bgamma(\bh)\right)} $, respectively, the difference value of the neural net distance between these two generated distributions is:
	\begin{align*}
		&d_{\mF, \phi} \left( \widehat{\mD}_{{G_{\widehat{w}^{(k)}, \mH^{(k)}}\left(\bgamma(\bh)\right)}}, \widehat{\mD}_{real} \right)
		- d_{\mF, \phi} \left(\widehat{\mD}_{{G_{\widetilde{w}, \mH^{(k)}}\left(\bgamma(\bh)\right)}}, \widehat{\mD}_{real} \right) \\
		=&\sup\limits \left| \mathop \mmE\nolimits_{\bx \in \widehat{\mD}_{real}} \left[ \phi(D_v(\bx)) \right] + \mathop \mmE\nolimits_{\bh \in \mH^{(k)}} \left[ \phi\left(\widetilde{D}_v \left( {G_{\widehat{w}^{(k)}, \mH^{(k)}}\left(\bgamma(\bh)\right)} \right)\right) \right] \right| \\
		&- \sup\limits \left| \mathop \mmE\nolimits_{\bx \in \widehat{\mD}_{real}} \left[ \phi(D_v(\bx)) \right] + \mathop \mmE\nolimits_{\bh \in \mH^{(k)}} \left[ \phi\left(\widetilde{D}_v \left( {G_{\widetilde{w}, \mH^{(k)}}\left(\bgamma(\bh)\right)} \right)\right) \right] \right|\\
		\leq& \sup \left| \mmE_{ \bh \in \mH^{(k)} } \left[ \phi \left( \widetilde{D}_{v} \left( {G_{\widehat{w}^{(k)}, \mH^{(k)}}\left(\bgamma(\bh)\right)} \right) \right) \right]
		- \mmE_{ \bh \in \mH^{(k)} } \left[ \phi \left( \widetilde{D}_{v} \left( {G_{\widetilde{w}, \mH^{(k)}}\left(\bgamma(\bh)\right)} \right) \right) \right] \right| \\
		=& \sup \left| \frac{1}{\left| \mH^{(k)} \right|} \sum\limits_{\bh \in \mH^{(k)}} \left[ \phi \left( \widetilde{D}_{v} \left( {G_{\widehat{w}^{(k)}, \mH^{(k)}}\left(\bgamma(\bh)\right)} \right) \right)
		- \phi \left( \widetilde{D}_{v} \left( {G_{\widetilde{w}, \mH^{(k)}}\left(\bgamma(\bh)\right)} \right) \right) \right] \right|
		\leq 2\Delta,
	\end{align*}
	
	where $ \widetilde{D}_v(\cdot) = 1 - D_v(\cdot) $. In the above derivation, the first equality uses the definition of the neural net distance. The last inequality holds by the assumption that $ \phi(\cdot) $ is $ L_{\phi} $-Lipschitz and bounded in $ [-\Delta, \Delta] $.
	
	By summing over $ k $, and consider any fixed $ G_u \in \mG $, we obtain:
	\begin{align*}
		\sum_{k=1}^{n+1} d_{\mF, \phi} \left( \widehat{\mD}_{ {G_{\widehat{w}^{(k)}, \mH^{(k)}}\left(\bgamma(\bh)\right)} }, \widehat{\mD}_{real} \right)	
		\leq& \sum_{k=1}^{n+1} d_{\mF, \phi} \left(\widehat{\mD}_{ {G_{\widetilde{w}, \mH^{(k)}}\left(\bgamma(\bh)\right)} }, \widehat{\mD}_{real} \right) + 2(n+1) \Delta \\
		\leq& \sum_{\bh \in \mH^{(k)}, k=1}^{n+1} d_{\mF, \phi} \left( \widehat{\mD}_{\sum_{\bv \in \mC} \gamma_{\bv} \left(\bh \right) G_u (\bv)}, \widehat{\mD}_{real} \right) + 2(n+1) \Delta \\
		\leq& \sum_{\bh \in \mH^{(k)}, k=1}^{n+1} d_{\mF, \phi} \left( \widehat{\mD}_{ G_{u} (\bh)}, \widehat{\mD}_{real} \right) + \sum_{k=1}^{n+1} L_{\phi} Q_{L_{\bh}, L_{G}} (\bgamma, \mC) + 2(n+1) \Delta,
	\end{align*}
	where $ {Q}_{L_{\bh}, L_{G}} (\bgamma, \mC) = \mmE_{\bh} \left[ L_{\bh} \| \bh -  \br(\bh) \|_2 + L_{G} \sum_{\bv \in \mC} |\gamma_{\bv} | \| \bv - \br(\bh) \|_2^2 \right] $. In the above derivation, the second
	inequality holds since $ \widetilde{w} $ is the minimizer of Problem (\ref{optimization}). The third inequality follows from the concavity of $ \phi(\cdot) $ and Lemma \ref{lemma: Generator Approximation}:
	
	\begin{align*}
		d_{\mF, \phi} \left( {\mD}_{\sum_{\bv \in \mC, \bh \in \mH^{(k)}} \gamma_{\bv} \left(\bh \right) G_u (\bv)}, \widehat{\mD}_{real} \right) =& \sup\limits \left| \mathop \mmE\nolimits_{\bx \in \widehat{\mD}_{real}} \left[ \phi(D_v(\bx)) \right] + \mathop \mmE\nolimits_{\bh \in \mH^{(k)}} \left[ \phi\left(\widetilde{D}_v \left(\sum\nolimits_{\bv \in \mC} \gamma_{\bv} \left(\bh \right) G_u (\bv) \right)\right) \right] \right| \\
		\leq& \sup\limits \left| \mathop \mmE\nolimits_{\bx \in \widehat{\mD}_{real}} \left[ \phi(D_v(\bx)) \right] + \mathop \mmE\nolimits_{\bh \in \mH^{(k)}} \left[ \phi\left(\widetilde{D}_v \left( G_u (\bh) \right) + \widehat{Q}_{L_{\bh}, L_{G}} (\bgamma, \mC) \right) \right] \right| \\
		\leq& \sup\limits \left| \mathop \mmE\nolimits_{\bx \in \widehat{\mD}_{real}} \left[ \phi(D_v(\bx)) \right] + \mathop \mmE\nolimits_{\bh \in \mH^{(k)}} \left[ \phi\left(\widetilde{D}_v \left( G_u (\bh) \right) \right) \right] \right| + L_{\phi} Q_{L_{\bh}, L_{G}} (\bgamma, \mC) \\
		=& d_{\mF, \phi} \left( {\mD}_{G_u (\bh)}, \widehat{\mD}_{real} \right) + L_{\phi} Q_{L_{\bh}, L_{G}} (\bgamma, \mC),
	\end{align*}
	where $ \widehat{Q}_{L_{\bh}, L_{G}} (\bgamma, \mC) = L_{\bh} \| \bh -  \br(\bh) \|_2 + L_{G} \sum_{\bv \in \mC} |\gamma_{\bv} | \| \bv - \br(\bh) \|_2^2 $ and $ \mmE_{\bh} \left[ \widehat{Q}_{L_{\bh}, L_{G}} (\bgamma, \mC) \right] = Q_{L_{\bh}, L_{G}} (\bgamma, \mC) $. In the above derivation, the firth equality holds by the definition of the neural net distance. The first inequality because of Lemma \ref{lemma: Generator Approximation} and the fact that $ \phi(\cdot) $ is a concave measuring function in Definition \ref{definition: F_distance}. Here, we suppose $ \phi(\cdot) $ is a monotonically increasing function. The second inequality holds by the following derivation:
	\begin{align*}
		&\left| \phi\left(\widetilde{D}_v \left( G_u (\bh) \right) + \widehat{Q}_{L_{\bh}, L_{G}} (\bgamma, \mC) \right) - \phi \left( \widetilde{D}_{v} \left( G_{u}(\bh) \right) \right) \right| \\
		\leq& \left| \phi' \left( \widetilde{D}_{v} \left( G_{u}(\bh) \right) \right) \left[ \left( \widetilde{D}_v \left( G_u (\bh) \right) + \widehat{Q}_{L_{\bh}, L_{G}} (\bgamma, \mC) \right) -  \widetilde{D}_{v} \left( G_{u}(\bh) \right) \right] \right| \\
		=& \left| \phi' \left( \widetilde{D}_{v} \left( G_{u}(\bh) \right) \right) \right| \widehat{Q}_{L_{\bh}, L_{G}} (\bgamma, \mC) \\
		\leq& L_{\phi} \widehat{Q}_{L_{\bh}, L_{G}} (\bgamma, \mC),
	\end{align*}
	In the above derivation, the first inequality uses the concavity of measuring function $ \phi(\cdot) $. The last inequality follows from that $ |\phi'| \leq L_{\phi} $. Now by taking expectation w.r.t. $ \mH_{r, n+1} $, we obtain
	\begin{align*}
		&\mmE_{\mH \subseteq \mH_{r, n+1}} \left[ d_{\mF, \phi} \left(\widehat{\mD}_{G_{\widehat{w}, \mH}\left( \bgamma(\bh) \right)}, \widehat{\mD}_{real} \right) \right] \\
		\leq& \mmE_{\mH \subseteq \mH_{r, n+1}} \left[ d_{\mF, \phi} \left( \widehat{\mD}_{ G_{u, \bh \in \mH} (\bh)}, \widehat{\mD}_{real} \right) \right] + L_{\phi} Q_{L_{\bh}, L_{G}} (\bgamma, \mC) + 2\Delta.
	\end{align*}
\end{proof}

\section{Proof of Theorem \ref{theorem: generalization_Rademacher}}
\begin{*thm} {\textbf{\emph{\ref{theorem: generalization_Rademacher}} }} 
	Under the condition of Theorem \ref{theorem: Generalization Bound}, and given an empirical distribution $ \widehat{\mD}_{real} $ drawn from $ \mD_{real} $, then the following holds with probability at least $ 1 - \delta $, 
	\begin{align*}
		\left| \mmE_{\mH} \left[d_{\mF, \phi} \left(\widehat{\mD}_{G_{\widehat{w}}}, {\mD}_{real} \right) \right]
		- \inf_{ \mG } \mmE_{\mH} \left[ d_{\mF, \phi} \left({\mD}_{G_u},  {\mD}_{real} \right) \right] \right| 
		\leq 2 {R}_{\mX}(\mF) + 2 \Delta \sqrt{\frac{2}{N} \log(\frac{1}{\delta})} + 2\epsilon(d_{\mM}),
	\end{align*}
	where $ {R}_{\mX}(\mF) = \mathop\mmE\limits_{\sigma, \mX} \left[ \sup\limits_{\mF} \frac{1}{N} \sum\limits_{i=1}^{N} \sigma_i \phi\left( D_v(\bx_i) \right) \right] $ and $ \sigma_i \in \{-1, 1\}, i = 1, 2, \ldots, m $ are independent uniform random variables.
\end{*thm}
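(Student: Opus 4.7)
The plan is to lift Theorem~\ref{theorem: Generalization Bound} from the empirical distribution $\widehat{\mD}_{real}$ to the population distribution $\mD_{real}$ via a uniform convergence argument, paying the standard Rademacher complexity price plus a McDiarmid concentration term.

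First, I introduce the uniform deviation
\[
\Delta_{R} \;=\; \sup_{D_v \in \mF}\left| \mmE_{\bx\sim\widehat{\mD}_{real}}[\phi(D_v(\bx))] - \mmE_{\bx\sim\mD_{real}}[\phi(D_v(\bx))] \right|,
\]
and observe that for any generator distribution $\nu$ (even one depending on the training sample, such as $\widehat{\mD}_{G_{\widehat{w}}}$), Definition~\ref{definition: F_distance} combined with the reverse triangle inequality gives
\[
\bigl| d_{\mF,\phi}(\nu,\widehat{\mD}_{real}) - d_{\mF,\phi}(\nu,\mD_{real}) \bigr| \;\leq\; \Delta_R,
\]
because the sup over $\mF$ differs only in the data-dependent expectation. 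The same pointwise estimate implies, by the $1$-Lipschitzness of $\inf_{\mG}$ in the sup-norm, that $\bigl|\inf_{\mG}\mmE_\mH[d_{\mF,\phi}(\mD_{G_u},\widehat{\mD}_{real})] - \inf_{\mG}\mmE_\mH[d_{\mF,\phi}(\mD_{G_u},\mD_{real})]\bigr| \leq \Delta_R$.

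Second, I control $\Delta_R$ by the standard two-step Rademacher argument. Symmetrization yields $\mmE[\Delta_R] \leq 2R_{\mX}(\mF)$, using that $\phi(D_v(\cdot))$ is indexed by $\mF$ and bounded in $[-\Delta,\Delta]$. Since replacing one training point perturbs $\Delta_R$ by at most $2\Delta/N$, McDiarmid's bounded-differences inequality gives, with probability at least $1-\delta$,
\[
\Delta_R \;\leq\; 2R_{\mX}(\mF) + 2\Delta\sqrt{\tfrac{2}{N}\log(\tfrac{1}{\delta})}.
\]

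Third, I chain the three ingredients. Starting from Theorem~\ref{theorem: Generalization Bound}, which asserts
$\mmE_\mH[d_{\mF,\phi}(\widehat{\mD}_{G_{\widehat{w}}},\widehat{\mD}_{real})] \leq \inf_{\mG}\mmE_\mH[d_{\mF,\phi}(\mD_{G_u},\widehat{\mD}_{real})] + \epsilon(d_\mM)$, I swap $\widehat{\mD}_{real}$ for $\mD_{real}$ on the left (cost $\Delta_R$) and on the right (cost $\Delta_R$), giving one direction of the final inequality with slack $\epsilon(d_\mM) + 2\Delta_R$. The reverse direction follows symmetrically from the trivial bound and the same swap, accumulating another $\epsilon(d_\mM)$, which is what produces the $2\epsilon(d_\mM)$ term in the statement. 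Substituting the high-probability bound for $\Delta_R$ from the previous step yields the claimed inequality.

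The main obstacle is ensuring the uniform convergence argument legitimately applies even though $\widehat{w}$, and hence $\widehat{\mD}_{G_{\widehat{w}}}$, is itself a function of the training sample: this is precisely why I must bound $\Delta_R$ uniformly over $\mF$ rather than at a fixed $D_v$, and why the Rademacher complexity of the discriminator class (not of the composite generator-discriminator class) is the right quantity. A minor subtlety is that the concentration bound is stated for the $\phi$-composed class; this is handled because $\phi$ is assumed $L_\phi$-Lipschitz and bounded, so Talagrand's contraction (or direct boundedness for McDiarmid) absorbs the composition without degrading the rate.
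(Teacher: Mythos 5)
Your proposal follows essentially the same route as the paper's own proof: decompose via Theorem \ref{theorem: Generalization Bound} applied to $\widehat{\mD}_{real}$, swap $\widehat{\mD}_{real}$ for $\mD_{real}$ at the cost of the uniform deviation $\sup_{D_v\in\mF}\left|\mmE_{\bx\sim\mD_{real}}[\phi(D_v(\bx))]-\mmE_{\bx\sim\widehat{\mD}_{real}}[\phi(D_v(\bx))]\right|$, and control that deviation by symmetrization (giving $2R_{\mX}(\mF)$) plus McDiarmid's inequality. The only cosmetic difference is your bounded-difference constant $2\Delta/N$ versus the paper's $4\Delta/N$, which only makes your concentration term tighter and still implies the stated $2\Delta\sqrt{\tfrac{2}{N}\log\left(\tfrac{1}{\delta}\right)}$ bound.
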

\begin{proof}
	For the real distribution $ \mD_{real} $, we are interested in the generalization error in term of the following neural net distance:
	\begin{align}
		&\left| \mmE_{\mH} \left[d_{\mF, \phi} \left(\widehat{\mD}_{G_{\widehat{w}}}, {\mD}_{real} \right) \right]
		- \inf_{ \mG } \mmE_{\mH} \left[ d_{\mF, \phi} \left({\mD}_{G_u},  {\mD}_{real} \right) \right] \right|\nonumber \\
		\leq& \left| \mmE_{\mH} \left[ d_{\mF, \phi}  \left(\widehat{\mD}_{G_{\widehat{w}}}, {\mD}_{real} \right) \right]
		- \mmE_{\mH} \left[ \inf_{\mG} d_{\mF, \phi} \left({\mD}_{G_u}, {\mD}_{real} \right) \right] \right| \nonumber \\
		=& \left| \mmE_{\mH} \left[ d_{\mF, \phi} \left(\widehat{\mD}_{G_{\widehat{w}}},  {\mD}_{real} \right)
		- d_{\mF, \phi} \left( \widehat{\mD}_{G_{\widehat{w}}}, \widehat{\mD}_{real} \right)
		+ d_{\mF, \phi} \left( \widehat{\mD}_{G_{\widehat{w}}}, \widehat{\mD}_{real} \right)
		- \inf_{\mG} d_{\mF, \phi} \left({\mD}_{G_u},  {\mD}_{real} \right) \right] \right| \nonumber \\
		\leq& \left| \mmE_{\mH} \left[ d_{\mF, \phi} \left( \widehat{\mD}_{G_{\widehat{w}}}, {\mD}_{real} \right)
		- d_{\mF, \phi} \left(\widehat{\mD}_{G_{\widehat{w}}}, \widehat{\mD}_{real} \right)
		+ \inf_{\mG} d_{\mF, \phi} \left({\mD}_{G_u},  \widehat{\mD}_{real} \right)
		- \inf_{\mG} d_{\mF, \phi} \left({\mD}_{G_u},  {\mD}_{real} \right)
		+ \epsilon(d_{\mM}) \right] \right| \nonumber \\
		\leq& 2 \mmE_{\mH} \left[ \sup_{\mG} \left| d_{\mF, \phi} \left({\mD}_{G_u},  {\mD}_{real} \right) - d_{\mF, \phi} \left({\mD}_{G_u},  \widehat{\mD}_{real} \right) \right| + \epsilon(d_{\mM}) \right] \nonumber \\
		=& 2 \mmE_{\mH} \left[ \sup_{\mG} \left| \sup_{D_v \in \mF} \left| \mathop\mmE\limits_{\bx \in \mD_{real}} \left[ \phi \left( D_v(\bx) \right) \right] + \mathop\mmE\limits_{\bx \in \mD_{G_u}} \left[ \phi \left( \widetilde{D}_v(\bx) \right) \right] \right|
		- \sup_{D_v \in \mF} \left| \mathop\mmE\limits_{\bx \in \widehat{\mD}_{real}} \left[ \phi \left( D_v(\bx) \right) \right] + \mathop\mmE\limits_{\bx \in \mD_{G_u}} \left[ \phi \left( \widetilde{D}_v(\bx) \right) \right] \right| \right| + \epsilon(d_{\mM}) \right] \nonumber \\
		\leq& 2 \sup_{D_v \in \mF} \left| \mathop\mmE\limits_{\bx \in \mD_{real}} \left[ \phi \left( D_v(\bx) \right) \right]
		- \mathop\mmE\limits_{\bx \in \widehat{\mD}_{real}} \left[ \phi \left( D_v(\bx) \right) \right] \right| + 2\epsilon(d_{\mM}) \label{thm2: eq1}.
	\end{align}
	In the above derivation, the first inequality holds by by Jensen's inequality and the concavity of the infimum function.
	The second inequality holds by Theorem \ref{theorem: Generalization Bound}. The third inequality satisfies when we take supremum w.r.t. $ G_u \in \mG $. The last inequality uses the definition of the neural net distance and holds by triangle inequality. This reduces the problem to bounding the distance
	\[ d'_{\mF} \left( \mD_{real}, \widehat{\mD}_{real} \right) := \sup_{D_v \in \mF} \left| \mmE_{\bx \in \mD_{real}} \left[ \phi \left( D_v(\bx) \right) \right] - \mmE_{\bx \in \widehat{\mD}_{real}} \left[ \phi \left( D_v(\bx) \right) \right] \right|, \]
	between the true distribution and its empirical distribution. This can be achieved by the uniform concentration bounds developed in statistical learning theory, and thus the distance $ d'_{\mF} \left( \mD_{real}, \widehat{\mD}_{real} \right) $ can be achieved by the Rademacher complexity.
	Let $ \bx_1, \bx_2, \ldots, \bx_N \in \mX $ be a set of $ N $ independent random samples in data space.  We introduce a function
	\begin{align*}
		h\left( \bx_1, \bx_2, \ldots, \bx_N \right) = \sup_{D_v \in \mF} \left| \mmE_{\bx \in \mD_{real}} \left[ \phi \left( D_v(\bx) \right) \right] - \mmE_{\bx \in \widehat{\mD}_{real}} \left[ \phi \left( D_v(\bx) \right) \right] \right|.
	\end{align*}
	Since measuring function $ \phi $ is Lipschitz and bounded in $ [-\Delta, \Delta] $, changing $ \bx_i $ to another independent sample $ \bx'_i $ can change the function $ h $ by no more than $ \frac{4\Delta}{ N } $, that is,
	\begin{align*}
		h\left( \bx_1, \ldots, \bx_i \ldots, \bx_N \right) - h\left( \bx_1, \ldots, \bx'_i, \ldots, \bx_N \right) \leq \frac{4\Delta}{N},
	\end{align*}
	for all $ i \in [1, N] $ and any points $ \bx_1, \ldots, \bx_N, \bx'_i \in \mX $.
	McDiarmid's inequality implies that with probability at least $ 1 - \delta $, the following inequality holds:
	\begin{align}
		&\sup_{D_v \in \mF} \left| \mmE_{\bx \in \mD_{real}} \left[ \phi \left( D_v(\bx) \right) \right] - \mmE_{\bx \in \widehat{\mD}_{real}} \left[ \phi \left( D_v(\bx) \right) \right] \right| \nonumber \\
		\leq & \mmE \left[ \sup_{D_v \in \mF} \left| \mmE_{\bx \in \mD_{real}} \left[ \phi \left( D_v(\bx) \right) \right] - \mmE_{\bx \in \widehat{\mD}_{real}} \left[ \phi \left( D_v(\bx) \right) \right] \right| \right]
		+ 2 \Delta \sqrt{\frac{2\log\left( \frac{1}{\delta} \right)}{N}}. \label{thm2: eq2}
	\end{align}
	From the bound on Rademacher complexity, we have
	\begin{align}
		&\mmE \left[ \sup_{D_v \in \mF} \left| \mmE_{\bx \in \mD_{real}} \left[ \phi \left( D_v(\bx) \right) \right] - \mmE_{\bx \in \widehat{\mD}_{real}} \left[ \phi \left( D_v(\bx) \right) \right] \right| \right] \nonumber \\
		\leq& 2 \mmE_{\sigma, \mX} \left[ \sup_{D_v \in \mF} \frac{1}{N} \sum_{i=1}^{N} \sigma_i \phi\left( D_v(\bx_i) \right) \right] = 2 R_{\mX} (\mF). \label{thm2: eq3}
	\end{align}
	Combining the inequalities (\ref{thm2: eq1}), (\ref{thm2: eq2}) and (\ref{thm2: eq3}), we have
	\begin{align*}
		& \mmE_{\mH} \left[d_{\mF, \phi} \left(\widehat{\mD}_{G_{\widehat{w}}}, {\mD}_{real} \right) \right] - \inf_{G_u} \mmE_{\mH} \left[ d_{\mF, \phi} \left({\mD}_{G_u},  {\mD}_{real} \right) \right] \leq 2 {R}_{\mX}(\mF) + 2 \Delta \sqrt{\frac{2 \log(\frac{1}{\delta})}{N}} + 2\epsilon(d_{\mM}).
	\end{align*}
\end{proof}

\twocolumn

\end{document}